\documentclass{article}
\usepackage{dd_preprint,times}
\usepackage[T1]{fontenc}
\usepackage{microtype}
\usepackage{amsmath,amssymb,amsthm,bm}
\usepackage{booktabs,tabularx,multirow}
\usepackage{graphicx}
\usepackage{flafter}
\usepackage{placeins}
\usepackage{xcolor}
\usepackage{url}
\usepackage[hidelinks]{hyperref}
\definecolor{draftaccent}{HTML}{85501D}

\definecolor{scene}{HTML}{247C8B}
\definecolor{entity}{HTML}{C88028}
\definecolor{relation}{HTML}{A64D79}

\newcommand{\R}{\mathbb{R}}
\newcommand{\sg}{\operatorname{sg}}

\newcommand{\method}{Dancing Diffusion}
\newtheorem{proposition}{Proposition}
\newtheorem{lemma}{Lemma}
\newtheorem{corollary}{Corollary}
\title{Dancing Diffusion:\\Learning Continuous Programs\\for Image Generation and Editing}
\author{Cui Enhe\textsuperscript{1,2}\quad Jinyue Yang\textsuperscript{2}\quad Bojian Yin\textsuperscript{2}\quad Guoqi Li\textsuperscript{2}\\
\textsuperscript{1}Renmin University of China\\
\textsuperscript{2}Institute of Automation, Chinese Academy of Sciences}
\iclrfinalcopy
\hypersetup{pdftitle={Dancing Diffusion: Learning Continuous Programs for Image Generation and Editing},pdfauthor={Cui Enhe, Jinyue Yang, Bojian Yin, Guoqi Li},pdfsubject={Continuous conditioning programs; mechanism study and development evaluation}}
\begin{document}
\maketitle
\begin{abstract}
Diffusion models generate samples through a multi-step denoising trajectory,
yet condition every step on the same instruction, so the trajectory cannot
be addressed at the level of individual operations. We introduce \method{},
which assigns each span of timesteps a dedicated continuous condition block
and thereby turns conditioning into a latent program. A compiler maps text
features into this program, and a fixed pretrained generator executes its
blocks over successive sampling intervals. The trajectory can then dance,
keeping its executed history while its continuation is redirected. A shared
prefix is executed once and reused, and a later operation is rewritten while
content fixed by earlier operations is preserved. The compiler is trained
through the images it produces, with visual feedback on both the requested
change and the retained content. On SANA, reusing an executed prefix
preserves markedly more of the scene than regenerating from noise at a
similar strength of edit, learned conditions preserve more background than
native prompt switching from the same saved state, and the learned program
improves generation quality over native SANA. Later branches weaken the
edit, which leaves semantic control over individual operations limited. By
converting a single-instruction trajectory into a program of addressable
operations, \method{} lays a foundation for structured intermediate
computation in diffusion models.
\end{abstract}

\section{Introduction}
\label{sec:introduction}
Diffusion models generate samples through a multi-step denoising trajectory,
yet condition every step on the same instruction. Consequently, the
trajectory is not addressable at the level of individual operations. In
practice, variants that share early operations must recompute them, an edit
to one attribute propagates to content the request never mentioned, and
intermediate operations can be neither inspected nor corrected.

We introduce \method{}, which assigns each span of timesteps a dedicated
continuous condition block, turning conditioning into a latent program. A
compiler maps text features into this program, and a fixed pretrained
generator executes its blocks over successive sampling intervals
(Figure~\ref{fig:trajectory-topology}b). The trajectory thereby becomes
addressable at the level of individual operations, and it can dance, keeping
its executed history while its continuation is redirected. A shared prefix
can be executed once and reused, a later operation can be rewritten while
content fixed by earlier operations is preserved, and each block can be
intervened on in isolation to test what it controls
(Figure~\ref{fig:trajectory-topology}a).

\begin{figure}[t]
\centering
\includegraphics[width=\linewidth]{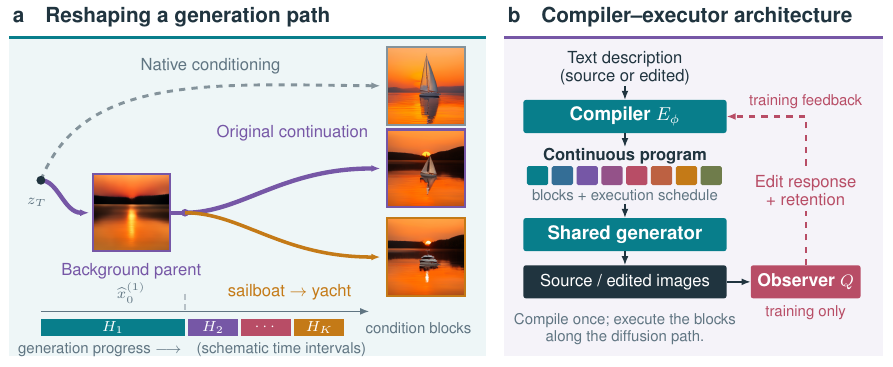}
\caption{\textbf{Dancing: editing the generation path.}
\textbf{(a)} A background-only first block produces a common parent, shown
as a predicted-clean readout. Both continuations reuse the same saved noisy
state and introduce a sailboat or yacht through future conditions. A native
same-noise generation is shown for orientation. Curves and intervals are
schematic; the images illustrate the operation and are not a final-checkpoint
comparison or evidence of native-method superiority.
\textbf{(b)} A compiler produces conditions for a shared generator, with
training-only feedback from a visual observer.}
\label{fig:trajectory-topology}
\end{figure}

We formalize when resuming a saved state reproduces the executed history
exactly and when a change to future conditions leaves declared background
content bounded. The compiler is trained through the images it produces.
A visual observer assesses both the requested change and the retention of
unrelated content, gradients update only the compiler, and the observer is
absent at inference. Temporal conditioning and learned connectors have
substantial precedents, including stage-specific prompt representations and
timestep-aware language interfaces
\citep{zhang2023prospect,hu2024ella,huberman2026sap}. Our contribution is the
combined compiler--executor formulation and its study under paired edit
feedback and controlled suffix interventions.

Our experiments test each of these properties directly. In a controlled
setting, learned conditioning changes when content appears, halving the
early subject error of a single-expert baseline while deferring background
detail. On SANA, resuming a saved state reproduces the source exactly, and
rewriting later operations leaves the executed history intact. At an early
branch, background LPIPS falls by up to 36\% relative to full regeneration,
from 0.321 to 0.227 at 6/12/12, while the semantic change is nearly
unchanged (0.088 versus 0.092). The reduction holds in every schedule under
paired family-bootstrap intervals and is confirmed on all 2376 outputs by
independent detector regions that the training observer never sees. From
the same saved state, learned conditions preserve more background than
native prompt switching in all three schedules, with comparable
replacement counts in two of them. These edits do not cost image quality. The learned program lowers
FID below native SANA, from 7.349 to 7.145 at 6/12/12, and a shared prefix is computed
once for all its continuations. Semantic control over individual operations,
however, remains limited, since later branches weaken the edit and
interventions on blocks confirm routing rather than a learned semantic
factorization. By converting a single-instruction trajectory into a program
of addressable operations, \method{} lays a foundation for structured
intermediate computation in diffusion models.

\section{Related work}
\label{sec:related-work}
\paragraph{Temporal conditioning and generation trajectories.}
Expert denoisers allocate capacity across noise regimes \citep{balaji2022ediffi},
while stage-specific embeddings, successive proxy prompts, and timestep-aware
connectors organize the information supplied to a generator
\citep{zhang2023prospect,huberman2026sap,hu2024ella}.
ProSpect optimizes stage-specific embeddings for reference-image attributes;
SAP provides training-free stage-aware prompting; ELLA learns a language
connector to a fixed diffusion model. These works show that the timing of
conditioning and a learned connector can each shape generation. Prompt-conditioned interventions
also study when concepts influence generation \citep{gorgun2025pci}.
We instead learn a caption-dependent continuous program from paired editing
feedback and use its saved intermediate states as points at which generation
can be redirected, comparing learned and native responses at identical
states.

\paragraph{Learned control interfaces and visual feedback.}
Prefix and prompt tuning adapt continuous inputs to language models;
concept embeddings and semantic connectors extend this approach to image
models \citep{li2021prefix,lester2021prompt,gal2022textual,hu2024ella}.
Spatial branches and gated grounding layers provide additional visual
control \citep{zhang2023controlnet,li2023gligen}. Policy-gradient and
differentiable-reward methods instead learn from generated outputs
\citep{black2023ddpo,fan2023dpok,clark2023draft,prabhudesai2024vader}.
We combine interface adaptation with feedback through a fixed generator,
using edit-response and retention objectives. The observer supplies training
gradients and is absent at inference, unlike sampling-time semantic guidance
\citep{chefer2023attend}. Fixed generator weights define the adaptation setting;
they do not prove that the complete behavior existed before interface training.

\paragraph{Image editing and semantic interventions.}
Perturbation and inversion methods start from an input image, attention
control constrains modified-caption generation, and instruction-conditioned
models learn paired transformations
\citep{meng2021sdedit,mokady2023nulltext,hertz2022prompt,brooks2022instructpix2pix}.
FlowEdit constructs an inversion-free mapping between source- and
target-conditioned distributions \citep{kulikov2025flowedit}.
Our interface resumes generated trajectories and replaces future conditions;
it does not accept arbitrary photographs through an inversion module.
We distinguish same-state suffix controls from image-input FlowEdit/SDEdit
adaptations, including their source access and compute. The SANA adaptations
do not reproduce published SD3/FLUX results. Interchange interventions motivate
testing whether low-level replacements realize high-level changes
\citep{geiger2022iit}; exact prefix reuse alone verifies execution, not semantics.

\paragraph{Evaluating semantic change and preservation.}
Generation benchmarks separate object presence, attribute binding, and
spatial relations \citep{ghosh2023geneval,huang2023compbench}. Editing
benchmarks additionally distinguish instruction adherence, content retention,
and perceptual quality through multimodal or regional assessments
\citep{ku2024viescore,yang2025complexedit,ye2025imgedit,jiang2026geditv2,qian2025giebench}.
Tool-assisted evaluations also document omissions and hallucinated changes
\citep{wang2025cigeval,yosef2025editinspector}.
We report semantics and preservation separately, retain missing-output and
order-disagreement rates for the adapted VLM, and treat CLIP and grounding as
proxies, particularly where they overlap the training observer
\citep{radford2021clip,liu2023groundingdino}.
CleanFID controls distribution-quality preprocessing \citep{parmar2022cleanfid};
it cannot establish editing correctness.

\section{Method}
\label{sec:method}
We edit a generation path by changing its future conditions while retaining
its executed history. This is the path's ``dancing'': temporal conditioning
becomes a program learned through execution and modified before resumption. The formulation
keeps three objects apart, namely a request, the conditions compiled from it,
and the sampler state reached under those conditions.

\subsection{Compiling a continuous program}
For a request $c$, let $H(c)\in\R^{L\times d}$ and $A(c)$ denote the text
features and attention mask accepted by the generator. A compiler maps them
to $K$ condition blocks and validity masks:
\begin{equation}
 (Z,M)=E_\phi(H(c),A(c)),\qquad
 Z\in\R^{K\times L\times d},\quad M\in\{0,1\}^{K\times L}.
 \label{eq:program}
\end{equation}
Each block retains the native token axis and enters the generator through
its conditioning interface. The program is $P(c)=(Z,M,\tau)$, where
$0=\tau_0<\cdots<\tau_K=1$ specifies successive execution intervals.
The representation permits arbitrary block counts and interval lengths;
the concrete architecture and schedule belong to the experimental instantiation.
Role-based supervision can initialize the blocks, but contextualized token
features need not contain exclusively the concepts assigned to their masks.
We therefore test block responses rather than assume semantic independence.

\subsection{Executing and observing a path}
A shared generator applies $N$ sampling updates from initial noise $x_0=\xi$:
\begin{equation}
 x_{n+1}=f_{\theta,n}(x_n;Z_{k(n)},M_{k(n)}),\quad
 k(n)=k\ \text{when }n/N\in[\tau_{k-1},\tau_k).
 \label{eq:execution}
\end{equation}
The final update belongs to the last interval. The compiler runs once per
request; execution determines which block is used at each step. We write
$F_{a:N}(x_a;P)$ for the remaining sampler updates and $D$ for image decoding.
A saved state $x_a$ contains sampling noise. To visualize a velocity-model
trajectory we instead decode its predicted-clean latent,
$\widehat x_{0,n}=x_n-\sigma_n v_\theta(x_n,\sigma_n;Z_{k(n)},M_{k(n)})$.
This readout describes the current prediction; it is not the state used to
resume the sampler.

\subsection{Intervening on future conditions}
\label{sec:branch}
Let a reference source program $\bar P(c)$ produce a source image $I_s$ and
saved state $\bar x_b$ after $b$ updates. For a target request $c'$, compile
$P_\phi(c')$ and replace conditions used from step $b$ onward:
\begin{equation}
 I_e^{(b)}=D\!\left(F_{b:N}(\sg[\bar x_b];P_\phi(c'))\right),
 \qquad I_s=D\!\left(F_{b:N}(\bar x_b;\bar P(c))\right).
 \label{eq:suffix-edit}
\end{equation}
Blocks that have already finished cannot affect the resumed computation.
An intervention can occur at a block boundary or a declared step within a
block; in the latter case it also replaces the remainder of the active block.
For $b=0$, the target program executes from the original noise, giving full
regeneration. For $b>0$, the source has already fixed a common history.

Under identical materialized inputs and deterministic execution, replaying
the source suffix reproduces the source endpoint exactly
(Appendix~\ref{app:proofs}). This protects the executed state, not future
pixels, since a new suffix can still move objects, change the background, or fail
to realize the target. Exact resumption and semantic preservation are therefore
separate tests. Native prompt switching can use the same saved state, making
it a direct control for the effect of learned suffix conditioning.
Appendix~\ref{app:factor-independence} formalizes background--subject
independence as selective endpoint response, distinct from statistical
independence of encoded features.

\subsection{Learning from an edit and its preservation cost}
\label{sec:learning}\label{sec:training}
Training uses the intervention in Eq.~\eqref{eq:suffix-edit}. The source
image, source state and preservation region are fixed references. A visual
observer $Q$ assesses the edited image and the detached source. For example,
with old- and new-entity evidence $q_{\rm old},q_{\rm new}$ and margin $m$,
response terms can take the form
\begin{align}
 \ell_{\rm gain}&=[m-(q_{\rm new}(I_e)-\sg[q_{\rm new}(I_s)])]_+,\\
 \ell_{\rm remove}&=[m-(\sg[q_{\rm old}(I_s)]-q_{\rm old}(I_e))]_+.
 \label{eq:response}
\end{align}
Terminal content terms assess the target endpoint, while preservation terms
penalize changes outside a declared source edit region. The objective is
\begin{equation}
 \mathcal L_{\rm edit}=\sum_{r\in\mathcal R}\lambda_r\ell_r/s_r,
 \qquad \lambda_r\geq0,\quad s_r>0.
 \label{eq:edit-objective}
\end{equation}
The coefficients and scales are fixed before the formal optimization run.
``Edit-only'' includes both requested change and content retention. It names
the adaptation objective, not an assumption about how the encoder was
initialized. Full-rollout paired training is the special case $b=0$; the
SANA experiments instead train on immutable source-prefix replays.

\begin{figure}[t]
\centering
\includegraphics[width=\linewidth]{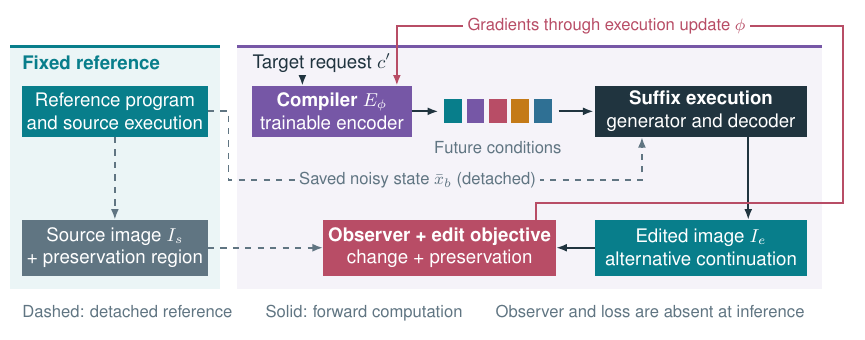}
\caption{\textbf{Training the conditions used after a saved prefix.}
The reference path supplies a detached sampler state and source image.
The target compiler provides future conditions; the suffix output receives
observer feedback for change and retention. Only compiler parameters are
optimized. The source reference cannot move to make the edit loss easier.
At deployment the observer and loss are absent.}
\label{fig:joint-training}
\end{figure}

\paragraph{Where learning occurs.}
With the schedule and Boolean masks held fixed, continuous values carry the
learning signal through execution:
\begin{equation}
 \nabla_\phi\mathcal L
 =\sum_{k\,\mathrm{active\ after}\ b}
 \left(\frac{\partial Z_k}{\partial\phi}\right)^{\!\top}
 \nabla_{Z_k}\mathcal L.
 \label{eq:compiler-learning-path}
\end{equation}
The generator, image decoder and observer can remain fixed while their
input derivatives train the compiler. The cached prefix is detached; discrete
masks and switching times receive no claimed gradient. Shared compiler
parameters mean that an update through a future block can also change the
compiler's earlier outputs on a fresh rollout. This motivates separate
checks of same-prompt drift and generation quality.

\subsection{Deployment and extensions}
At deployment, compile the request and run the generator, optionally caching
a prefix for several target continuations. No observer is queried. The same
interface can also receive trajectory supervision,
$\mathcal L_{\rm joint}=\mathcal L_{\rm edit}+\lambda_{\rm traj}\mathcal L_{\rm traj}/s_{\rm traj}$,
but the primary adaptation sets $\lambda_{\rm traj}=0$.
The optional extension and the current initialization history are documented
in Appendix~\ref{app:current-implementation}; they are not assumed to improve
editing. The mechanism study separates consequences of the execution
interface from consequences of learning its conditions.

\section{Experiments}
\label{sec:experiments}
We first test in a controlled setting whether learned conditioning can
change when content appears, then ask on SANA what changes when future
conditions are replaced. The two settings differ in data, supervision and
readouts, so their numbers are not pooled.

\subsection{A controlled setting: when content appears}
\label{sec:toy}
\begin{figure}[t]
\centering
\includegraphics[width=\linewidth]{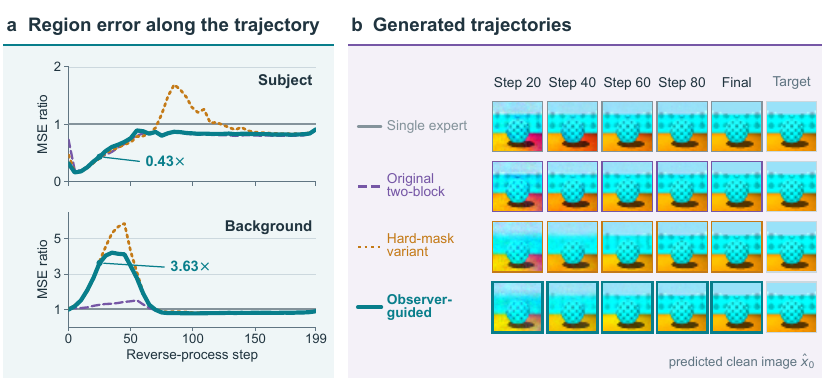}
\caption{\textbf{Learning when to reconstruct foreground and background.}
\textbf{(a)} Exact-mask subject and background MSE on 1,024 validation
images, normalized pointwise by the single-expert MSE, so that one denotes
parity and lower is better. Every measured point is retained, and absolute
curves are in Appendix~\ref{app:toy-absolute-mse}. \textbf{(b)} Predicted-clean
$\hat{x}_0$ readouts of separate pure-noise DDPM-200 trajectories with common
initial and per-step noise, followed by the target condition image. Row
colors match the curves in (a). The observer-guided model delays wall texture
while keeping a recognizable sphere. These trajectories are not samples from
the noised-reference evaluation in (a), and the example was selected post hoc
for correct colors, so it does not estimate how often colors are correct.}
\label{fig:toy-mse-trajectory}
\end{figure}

\paragraph{Question and controlled setting.}
Can learned conditioning change \emph{when} scene content becomes visible?
We study Patterned-Sphere3D, a controlled Shapes3D \citep{burgess2018shapes3d} derivative with exact
foreground masks and independently varied object and wall patterns.
The dataset contains 120,000 images at $64\times64$ resolution. We compare
an archived single-expert baseline, the original shared-denoiser two-block
model, a hard factor-mask variant, and the observer-guided two-block model
with texture supervision. All curves use the same single-expert reference;
these are historical model variants, not a one-variable ablation series.

\paragraph{Learning stage behavior through feedback.}
The observer-guided model shares one denoiser across stages and keeps the
full factor input available to its learned conditioning slots, with hard
factor visibility masks disabled. A frozen factor observer supplies semantic
feedback, with its factor-loss gradients restricted to the conditioning
encoder. Thus the encoder can learn stage-dependent semantic behavior
without manually removing factors from its input. The stage schedule and
supervised responsibilities are specified by the training objective;
regional reconstruction and texture losses additionally encourage early
object reconstruction and later wall detail. This is learned semantic
allocation under explicit supervision, rather than an independently
verified claim that observer feedback is its sole cause.

\paragraph{Region-wise evaluation.}
For 1,024 validation images and 41 noise levels, both methods receive the
same noised reference $(x_t,t)$ and noise realization. We compare their
clipped predicted-clean images with the clean reference, using the exact
sphere mask for subject MSE and its complement for background MSE.
Errors are averaged over RGB channels and region pixels in the $[-1,1]$
image range. The horizontal axis is reverse-process position $s=199-t$;
these curves are matched-state reconstruction diagnostics, not errors
accumulated along a generated trajectory. Separate pure-noise DDPM-200 \citep{ho2020ddpm}
readouts (Figure~\ref{fig:toy-mse-trajectory}b) show how the corresponding
models generate a scene over time.

\paragraph{Result and implication.}
The two regions follow different error profiles
(Figure~\ref{fig:toy-mse-trajectory}). At step 25, the observer-guided
model's subject MSE is $0.435\times$ the single-expert baseline, whereas
its background MSE is $3.629\times$ the baseline. By step 85, these ratios
are $0.870$ and $0.791$, respectively; at step 199 they are $0.912$ and
$0.897$. The early background penalty is consistent with delayed detail,
not an early quality improvement. The trajectory rows make the intended
effect visible, as the sphere is established before the wall pattern sharpens.
The displayed example has consistent final colors, but was selected for
that property; background-color failures elsewhere
remain, and low reconstruction error at late noise levels does not
rule them out. The controlled study therefore
supports changing the temporal allocation of visual content through
learned conditioning and explicit feedback. It motivates the SANA
prefix-reuse experiments without establishing universal generation quality
or attributing the effect to the observer alone.

\subsection{Editing text-to-image generation with SANA}
\label{sec:large-study}
\label{sec:v10-results}
\paragraph{Setup and primary comparison.}
We use SANA-600M \citep{xie2024sana} with a 273.1M-parameter compiler and three condition blocks.
The primary method is edit-only training, evaluated for the 3/9/18,
6/12/12, and 9/12/9 execution schedules. The final checkpoints are fixed
when training completed, before the present evaluation.
The 3/9/18 arm starts edit-only adaptation from initialization; the other
two arms inherit a 40-update joint warm start. This difference limits causal
comparisons across schedules. Their initialization histories are retained in
Appendix~\ref{app:current-implementation}.
The generator is fixed; the observer does not participate in generation.

The principal comparison is \emph{full regeneration}: apply the target
conditions from the initial noise, using the same trained compiler,
sampler, schedule, and seed as the editing arm. This is branch point zero.
Our intervention instead resumes the stored source state and changes only
future conditions. Comparing these arms asks whether reusing an executed
prefix preserves scene structure while achieving the requested change.
It does not, by itself, isolate the benefit of compiler training.
Initialized programs test the effect of adaptation; native full-caption
suffixes test the conditions used at a matched state
(Appendix~\ref{app:native-suffix}). These controls answer different questions
from full regeneration. The source paths in the matched comparisons come
from the frozen initial compiler, not the target-trained compiler.

\paragraph{Evaluation and interpretation.}
The training-disjoint evaluation uses 24 request families and three common
seeds, with branch points 0, 9, 15, and 21; all branch points are retained
rather than selected by their results. Identity resumption must reproduce
the uninterrupted source. Background LPIPS \citep{zhang2018lpips} and pixel measurements compare
each edit with its actual generated source, outside a fixed source-derived
edit region. Every conditional metric retains its accepted-mask denominator.
Semantic CLIP measurements are observer-family proxies, while the local
VLM protocol supplies explicitly adapted automated judgments, not human
preferences. Better preservation is useful only together with adequate
instruction following.
Complete final measurements and their coverage are collected in
Appendix~\ref{app:v13-final-evaluation}. Additional image-input controls and
independent detector measurements are separated by source access and
validation status in Appendix~\ref{app:comparative-evaluation}.

At branch 9, prefix reuse lowers background LPIPS relative to full
regeneration for all three schedules, while retaining a similar mean
semantic margin change. For example, at 6/12/12, LPIPS decreases from
0.321 to 0.227 and the CLIP margin change moves from 0.092 to 0.088.
Later branching strengthens preservation but eventually weakens the edit:
at branch 21, LPIPS is 0.033 and the margin change is 0.029.
These means support a controllable preservation--change
tradeoff, not superiority of the trained encoder over all conditioning
controls.
Paired family-bootstrap analysis confirms lower background LPIPS at branch 9
for all three schedules (Appendix~\ref{app:paired-prefix}); the reductions
in valid-attempt means are 36.0\%, 29.1\%, and 23.6\%, respectively.
The semantic proxy decreases for the first two schedules. Its interval
includes zero for 9/12/9, which does not establish equal editing success.
Independent category checks on 12 eligible families likewise show no uniform
ordering across schedules and detectors (Appendix~\ref{app:comparative-evaluation}).

\begin{table}[tbp]
\centering\small
\setlength{\tabcolsep}{4pt}
\caption{Matched category-cohort comparison: 12 families, three seeds, 36 attempts per schedule. Cells give BG LPIPS$\downarrow$ / DINO replacement count$\uparrow$ / OWLv2 replacement count$\uparrow$. Mask coverage is 32/36, 33/36 and 33/36; counts use all attempts. Regeneration starts at 0; others at 9. Native uses the saved state; SANA-adapted image editors use the source PNG. Counts are proxies, not full-caption success; all starts appear in Appendix~\ref{app:comparative-evaluation}.}
\label{tab:compact-independent-comparison}
\begin{tabular}{lccc}
\toprule
Method & 3/9/18 & 6/12/12 & 9/12/9\\
\midrule
Full regeneration & 0.383 / 13 / 27 & 0.334 / 11 / 29 & 0.307 / 14 / 29\\
Learned suffix & 0.254 / 14 / 29 & 0.247 / 15 / 27 & 0.242 / 14 / 25\\
Native suffix & 0.269 / 14 / 28 & 0.274 / 14 / 26 & 0.269 / 17 / 27\\
SDEdit--SANA & 0.284 / 16 / 28 & 0.288 / 16 / 27 & 0.293 / 12 / 27\\
FlowEdit--SANA & 0.143 / 13 / 27 & 0.138 / 12 / 25 & 0.142 / 12 / 24\\
\bottomrule
\end{tabular}
\end{table}

Table~\ref{tab:compact-independent-comparison} places the image-input controls alongside
regeneration and learned suffixes on the same category cohort. At start 9,
learned suffixes retain more background than SDEdit--SANA in all three
schedules; FlowEdit--SANA retains still more, with different replacement
counts. These are operating-point tradeoffs across different source access,
not an overall ranking. All starts and paired uncertainty remain in
Appendix~\ref{app:comparative-evaluation}.
An independent source-box check over all 2376 outputs confirms the same
preservation ordering at three region paddings. Learned branch 9 changes
less background than regeneration and SDEdit, but more than FlowEdit
(Appendix~\ref{app:independent-box-regions}). These source-only Grounding DINO
regions do not use the training observer and are box complements, not segmentations.

\paragraph{What changes beyond ordinary prompt switching?}
At exactly the same saved source state, native full-caption conditions already
support substantial subject replacement. Both use the same path-editing
operation; this comparison isolates the effect of learned conditions on
the resulting preservation--change balance.
The completed native extension covers all three schedules (432 additional
outputs with exact source and prefix matches). At branch 9, source-box
background LPIPS is lower with learned conditions by 0.017, 0.022 and 0.030
for 3/9/18, 6/12/12 and 9/12/9 (5\% padding; 65/72, 62/72 and 62/72
backgrounds). The descriptive family-bootstrap intervals are below zero;
all paddings and branches are retained in Appendix~\ref{app:native-source-box-regions}.
Late branching reverses this preservation advantage. At branch 9 on the fixed 36-attempt category cohort,
learned/native replacement counts are 14/14, 15/14 and 14/17 for Grounding
DINO, and 29/28, 27/26 and 25/27 for OWLv2, in schedule order. Thus the
first two schedules have similar category counts, while 9/12/9 favors native
conditions. The evidence supports an early-branch preservation benefit at
comparable replacement counts. The branch-point curves
(Figures~\ref{fig:development-tradeoff} and~\ref{fig:independent-tradeoff})
make this partial separation of background retention and subject change
visible. It is a behavioral property, not proof that encoder pretraining
caused a disentangled representation.

\paragraph{Qualitative comparison.}
\begin{figure}[!tbp]
\centering\setlength{\tabcolsep}{1pt}
\begin{tabular}{ccccc}
\scriptsize Source & \scriptsize Full regeneration & \scriptsize Learned suffix & \scriptsize FlowEdit--SANA & \scriptsize SDEdit--SANA\\
\multicolumn{5}{l}{\scriptsize Red bus to red truck; seed 101}\\
\includegraphics[width=.19\linewidth]{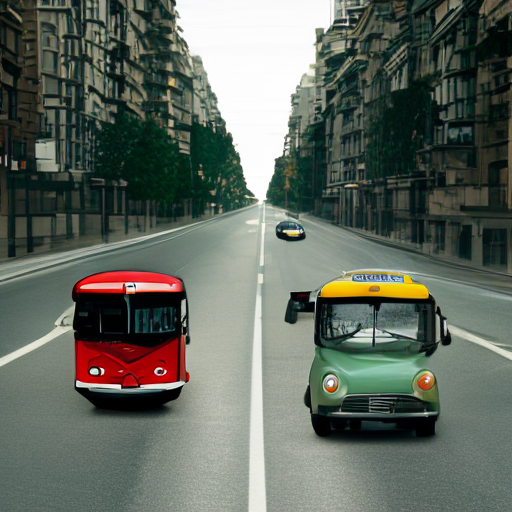} & \includegraphics[width=.19\linewidth]{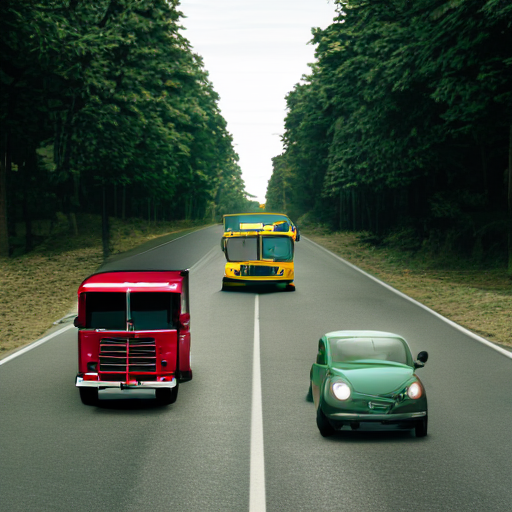} & \includegraphics[width=.19\linewidth]{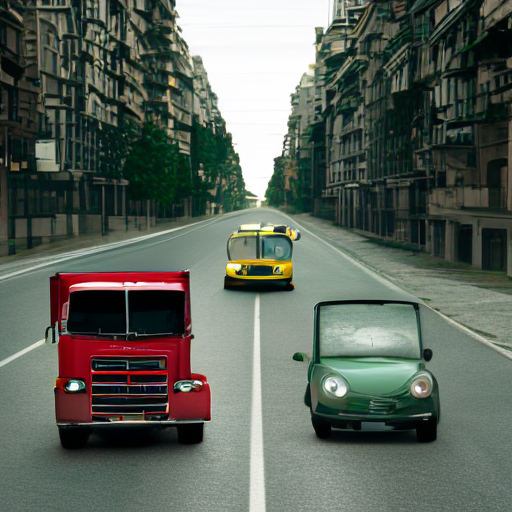} & \includegraphics[width=.19\linewidth]{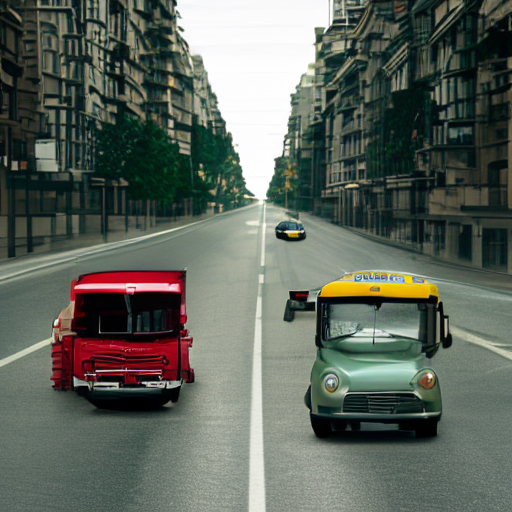} & \includegraphics[width=.19\linewidth]{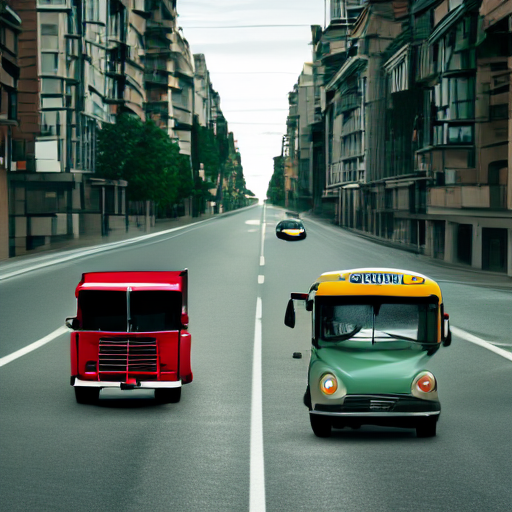}\\[3pt]
\multicolumn{5}{l}{\scriptsize Bench to planter; seed 202}\\
\includegraphics[width=.19\linewidth]{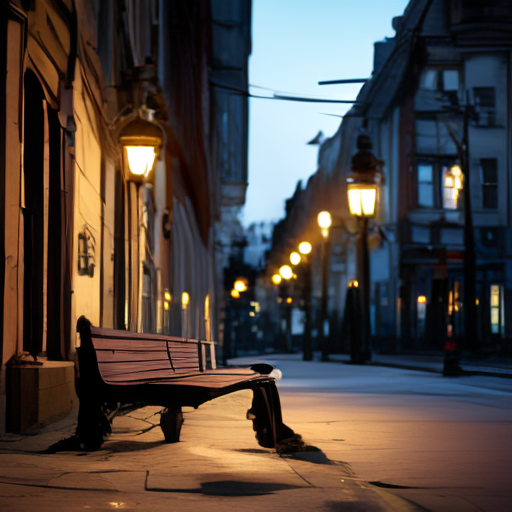} & \includegraphics[width=.19\linewidth]{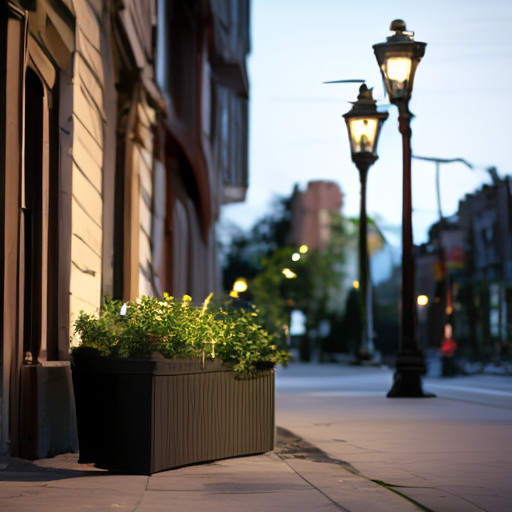} & \includegraphics[width=.19\linewidth]{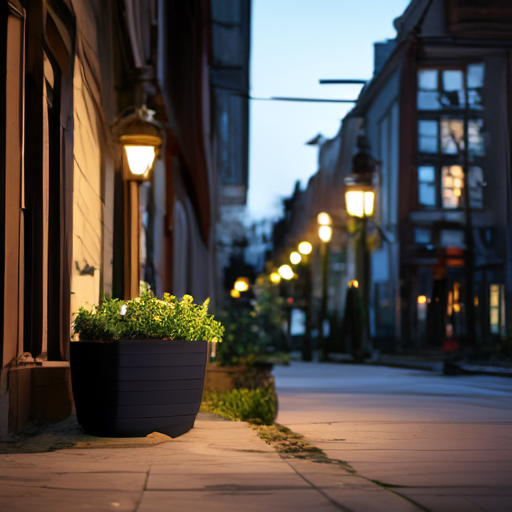} & \includegraphics[width=.19\linewidth]{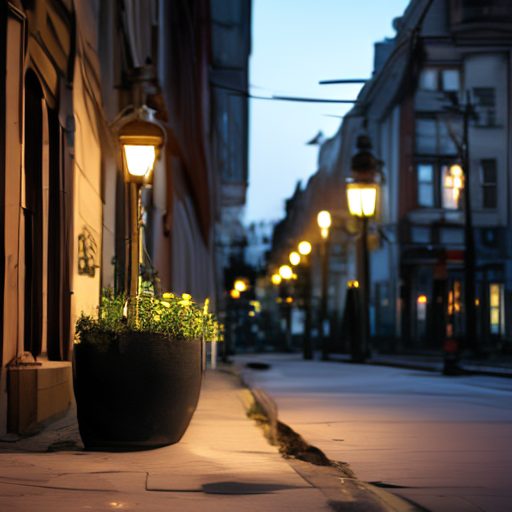} & \includegraphics[width=.19\linewidth]{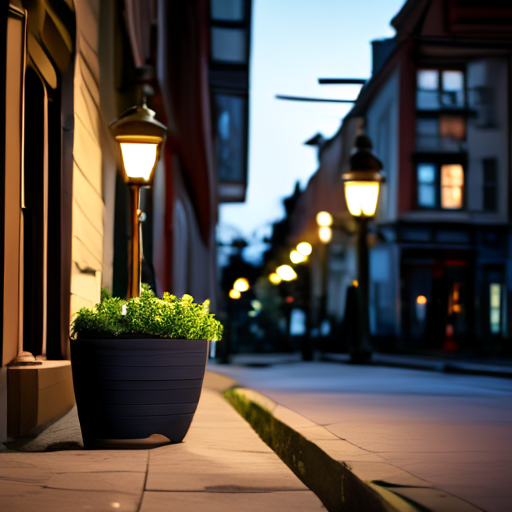}\\[3pt]
\end{tabular}
\caption{\textbf{Editing with visible scene retention.} Genuine, uncropped outputs for the same source and request: final checkpoint, schedule 6/12/12, branch/start 9 (regeneration: 0). The learned suffix retains avenue and building structure; image-input SANA adaptations also retain substantial context. Their source access differs (saved state versus PNG). These selected illustrations do not estimate success rates; failure cases are described in Appendix~\ref{app:final-representative}.}
\label{fig:v13-advantage-comparison}
\end{figure}

Figure~\ref{fig:v13-advantage-comparison} adds matched image-input controls
to the regeneration/prefix comparison. The avenue and street-lamp geometry
make preservation visible alongside the requested replacement.
Appendix~\ref{app:final-representative} gives more examples and describes failures,
including a kayak edit that misses the requested holding relation.
Aggregate comparisons are reported above.

\paragraph{What the mechanism tests establish.}
Block interventions distinguish routing from specialization. Across all
measured branches, 1200 interventions on already inactive blocks leave the
latent endpoint exactly unchanged relative to the same-compiler continuation.
Active-block semantic responses change with execution timing, but similar
patterns are present at initialization. Schedule crossing also changes the
relation between execution time and the compiler's learned clock. These tests
verify controllable routing without isolating a training-induced semantic
factorization. The complete schedule, source-path and block comparisons are
in Appendix~\ref{app:mechanism-current}.

Caching a common prefix also reduces the cost of four continuations by
22.0--23.4\% with identical endpoints, a benefit available to native prompt
switching as well (Appendix~\ref{app:prefix-cost}).

\paragraph{Generation quality.}
\label{sec:comparative-design}
All entries use MJHQ-30K \citep{li2024playground}, CleanFID 0.1.35 clean, 512 pixels, 30 steps, CFG 4, the same metadata ordering and CPU noise rule. Initializations are role-fitted encoders without edit-only adaptation. Final checkpoints are taken at training completion (update 1423).\par
\begin{table}[!htbp]\centering\small
\caption{Generation FID versus native SANA and across training stages. Negative final-minus-native differences indicate improvement. All entries use the same noise set; no training-seed uncertainty is inferred.}\label{tab:v13-final-quality}
\begin{tabular}{lrrrr}\toprule
Method / schedule & Initialization & Update 600 & Final 1423 & $\Delta$ vs. native\\\midrule
Native SANA & \multicolumn{3}{c}{7.349 (no adaptation)} & ---\\\midrule
3/9/18 & 7.248 & 7.203 & 7.315 & -0.034\\
6/12/12 & 7.268 & 7.247 & 7.145 & -0.204\\
9/12/9 & 7.411 & 7.466 & 7.375 & +0.026\\
\bottomrule\end{tabular}\end{table}
\par

Editing does not come at the expense of generation quality. The learned
program lowers FID below native SANA (7.349) by 0.204 at 6/12/12 (2.78\%)
and by 0.034 at 3/9/18, and stays within 0.026 at 9/12/9.
Relative to each initialization, changes are $+0.067$, $-0.123$ and
$-0.036$, respectively. Thus quality relative to the native generator and
the effect of edit training are distinct. These single-run results use
terminal checkpoints, not FID-selected checkpoints. FID measures generation
quality, not editing success.

\FloatBarrier
\section{Discussion and conclusion}
\label{sec:conclusion}
\method{} makes the generation path itself editable, with learned stage
conditions as a trainable interface for redirecting its future.
The controlled study demonstrates stage behavior
under explicit supervision. In SANA, early prefix reuse preserves more
background than full regeneration; later branching restricts the requested
change. Native suffix comparisons expose a preservation--replacement
tradeoff, and the learned program lowers FID below native SANA in two of
three schedules.
Caching saves computation with identical endpoints, as it would for
ordinary prompt switching.

\paragraph{Capability access through a trainable interface.}
The behavior belongs to the compiler--generator system, and fixed generator
weights cannot distinguish learned control from improved access to existing
behavior. Appendix~\ref{app:elicitation} specifies tests for these alternatives
and discusses the hypothesis connecting interface training to LLM post-training
\citep{yue2025rlvr,liu2025prorl}. Organizing sampling as a learned sequence
of decisions motivates the method; reasoning ability is not evaluated here.

\paragraph{Scope.}
The compiler runs once per caption, and the SANA results come from a
training-disjoint evaluation set with one training run per schedule. CLIP and source masks
share their family with the training observer, so we complement them with
independent detectors and observer-independent regions. Extending the
program to be recompiled during generation, and to image inputs through
inversion, are natural next steps.

\label{sec:main-end}
\clearpage
\subsection*{Reproducibility statement}
The appendix specifies sampling settings, model revisions, training weights,
checkpoint hashes, route definitions, and the family-level bootstrap unit.
Measured numerical figures are generated directly from archived machine-readable
reports. The preliminary long-run study separates selection diagnostics from
independent reporting; unmeasured results are not assumed. Schematic figures are labeled separately from measured results.
Model/data access and the complete experiment environment remain necessary
to reproduce generation; the manuscript does not report a fresh rerun of the
archived experiments.

\subsection*{AI use statement}
Generative AI tools assisted with implementation, research discussion, formal arguments,
interpretation of experiment reports, manuscript drafting and revision,
literature lookup, and creation of vector diagrams and plotting code.
A Qwen model supplied role-span pseudo-labels for the compiler's program data,
and a local Qwen vision--language model served as the automated pairwise judge
in the appendix evaluation.
All displayed numerical values come from experiment reports. The schematic diagrams are not generated-image evidence. Responsibility for the
paper's claims, citations, and artifacts rests with the authors.

\subsection*{Ethics statement}
The method inherits the content and bias risks of its pretrained generator
and visual evaluators. It provides no guarantee of factuality or fair
performance across concepts. Our experiments concern generated trajectories;
we do not evaluate manipulation of photographs of identifiable people.

\bibliographystyle{iclr2027_conference}
\bibliography{references}
\clearpage
\appendix
\section{Appendix roadmap}
\label{app:roadmap}
Part I records the implementation, the execution and trainability guarantees,
and the comparison protocols. Part II gives absolute errors for the controlled
setting. Part III reports the SANA editing evaluation, including paired
uncertainty, the native-suffix control, observer-independent regions and
prefix-reuse cost. Part IV examines schedules, blocks and an explicit-background
input variant. Part V contains a bidirectional stress test and additional
examples.

In the SANA measurements, detector counts and CLIP
margins are proxies, source boxes are not segmentations, and intervals carry
no multiplicity correction. Two outcomes
are negative. The automatic scene compiler fails the input-equality gate
(Appendix~\ref{app:b0-isolation}), so no FID is claimed for that variant, and
the large-subject stratum is empty on the recomputed sources, so that
comparison is unavailable.

\part*{Part I\quad Setup: implementation, guarantees and protocols}
\section{Current implementation and experimental provenance}
\label{app:current-implementation}
\paragraph{Compiler and executor.}
The SANA study uses three continuous blocks at native feature width 2304.
The 273.1M-parameter compiler is initialized from role-mask fitting at
update 800. A frozen copy of that initial compiler predicts the validity
masks and generates the source images and saved states, so the training
target does not move with later compiler updates. The SANA-600M 512-pixel
generator uses 30 steps and guidance 4. No trained decoder adapter is loaded
for the comparisons.

\paragraph{Training and warm start.}
The replay cohort has 9226 source-image identities, shared across the three
schedules. The 3/9/18 run starts at update zero from the role-fitted compiler,
without a joint parent or optimizer state. The 6/12/12 and 9/12/9 runs resume
their joint checkpoints at update 40, including Adam state, and continue with
zero trajectory weight to update 1423, giving 1383 edit-only updates.
We refer to the terminal update-1423 checkpoint as the final checkpoint.
Cross-schedule differences therefore combine schedule and initialization
history. The main comparisons use the terminal checkpoint; update 600 is an
intermediate diagnostic. The encoder learning rate is $2\times10^{-6}$, with
global batch 16, microbatch 4, weight decay 0 and training seed 20260930.
Each configuration is a single training run.

Training branches after the first block, at steps 3, 6 and 9 for schedules
3/9/18, 6/12/12 and 9/12/9. The two remaining blocks receive the target
conditions, and all 27, 24 and 21 suffix updates are differentiated with
activation checkpointing. The source image, prefix, source-region mask and
frozen validity masks are detached. Evaluation at steps 9, 15 and 21 is a
separate common-step comparison, not three training branch points.

\paragraph{Objective.}
Terminal and response terms use the frozen observer, and each raw term is
divided by its recorded frozen scale. The weights are target presence 2, old
absence 1, preservation 4, caption 0.4, background retention 1, response
presence 0.1, response absence 0.05, response heat 0.5 and response heat
absence 0.5. Relation and trajectory weights are zero. The source region uses
a 0.15 detection threshold and 0.02 padding, and accepted edit-area fractions
lie between 0.002 and 0.8. Exact configurations, recorded loss scales and
checkpoint identities are indexed in the reproducibility bundle.

\paragraph{Data and reporting scope.}
Training and evaluation indices are distinct, and
identity checks compare source-image identifiers rather than file positions.
The 24-family evaluation set uses seeds 101, 202 and 303. Every attempt is
retained for semantic proxies, and missing source masks stay in the coverage
count. Automatically substituted target captions can retain inappropriate
actions or attributes, so category presence alone cannot certify full-caption
compliance.

\paragraph{Optional joint extension.}
The joint objective adds a normalized velocity loss on a declared source
path with a detached target. The final edit-only results do not use it.
Historical joint runs and earlier pilots are archived separately and are not
pooled with the current comparisons.

\section{Execution guarantees and what they do not imply}
\label{app:proofs}
\begin{proposition}[Protected-prefix equality]
\label{prop:prefix}
With identical initial state, materialized prefix conditions, model parameters,
schedule and deterministic numerical execution, two runs agree at every step
of their common prefix. Saving and restoring the complete sampler state
reproduces the endpoint under the unchanged suffix.
\end{proposition}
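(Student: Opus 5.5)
The plan is an induction on the step index using the execution rule Eq.~\eqref{eq:execution}. First I will make the hypotheses precise. Each update $x_{n+1}=f_{\theta,n}(x_n;Z_{k(n)},M_{k(n)})$ is modeled as a function of the state, the step index, the materialized block, the fixed parameters $\theta$, and the schedule. The schedule fixes both the noise level $\sigma_n$ and the block index $k(n)$. Deterministic numerical execution is read as the requirement that $f_{\theta,n}$ return bitwise identical outputs on bitwise identical inputs, including fixed kernel choices and reduction orders. If the sampler injects per-step noise, that noise is counted as part of the materialized input of step $n$. With these conventions, $f_{\theta,n}$ is a genuine function of its listed arguments.

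\textbf{Step 1 (common prefix).} Let two runs share $x_0=\xi$ and agree on the conditions $(Z_{k(n)},M_{k(n)})$ for $n<b$. I will show $x_n=x_n'$ for all $n\le b$ by induction. The base case is the identical initial state. For the inductive step, equal $x_n$, equal $n$, and the same schedule give the same block index $k(n)$. The materialized prefix conditions then give equal block inputs, and applying the same function yields $x_{n+1}=x_{n+1}'$. Conditions used only at steps $n\ge b$ never enter these evaluations. This also covers an intervention inside a block, since only the conditions actually consumed before $b$ matter.

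\textbf{Step 2 (save and restore).} Define the complete sampler state at step $b$ as everything that $F_{b:N}$ reads. This includes $x_b$, the step counter, any solver history such as multistep buffers, and any RNG state. Restoring this tuple exactly gives the remaining updates the same inputs they would have had in the uninterrupted run. Running the unchanged suffix $\bar P(c)$ from there is therefore, by the same induction over $n=b,\dots,N-1$, identical to the uninterrupted continuation, and so $F_{b:N}(\bar x_b;\bar P(c))$ equals the source latent. Since $D$ is a deterministic map under the same conditions, applying it to equal latents gives the same $I_s$, which matches Eq.~\eqref{eq:suffix-edit}. The stop-gradient $\sg$ acts as the identity in the forward pass and so does not affect this.

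\textbf{Main obstacle.} The difficulty is definitional rather than mathematical. The argument holds only if ``complete sampler state'' and ``deterministic execution'' really capture every input of each update. Hidden state would break the conclusion, for example cached solver derivatives, stochastic samplers, nondeterministic GPU atomics, or a precision cast on save. I will state explicitly that the proposition covers bitwise equality under these conventions and does not survive lossy serialization. I will also add the remark that the argument says nothing about a modified suffix, which is the separation the paper draws between exact resumption and semantic preservation.
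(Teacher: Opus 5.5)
Your proposal is correct and matches the paper's proof essentially exactly. The paper inducts on the update rule in Eq.~\eqref{eq:execution} over the common prefix, then repeats the induction from the restored state, with the same caveats: a stochastic sampler needs its random state, a multistep solver needs its history, and equality across numerical runtimes is not claimed. Your added remarks on the deterministic decoder and on $\sg$ acting as the identity in the forward pass are harmless refinements.
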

\begin{proof}
Each update in Eq.~\eqref{eq:execution} has identical inputs in both runs, so
induction gives equality of the next state. The same induction starts at a
restored state for identity continuation. A stochastic sampler additionally
requires the same random state, and a multistep solver its history. Equality
across numerical runtimes is not asserted.
\end{proof}
The guarantee applies equally to native prompt switching. It is a
computational invariant and says nothing about preservation under a changed
suffix. Our endpoint-equality and inactive-block tests check it in the
deterministic implementation.

\paragraph{Differentiating a resumed program.}
The reference state in Eq.~\eqref{eq:suffix-edit} is detached, so the loss
differentiates the executed suffix and its continuous conditions but cannot
change the cached source path. A block whose interval ends before $b$ has no
direct effect on the loss, although shared encoder parameters can still
change its output after optimization. Frozen Boolean masks remove mask
selection from the learning path but not derivatives through valid
continuous features.

\subsection{Background--subject independence under path edits}
\label{app:factor-independence}
Fix a source state $\bar x_b$, schedule, masks and numerical randomness,
collectively $c$. Let $z$ contain the continuous suffix conditions, and define
the endpoint and its declared background and subject readouts by
\begin{equation}
 G_b(z;c)=D(F_{b:N}(\bar x_b;z)),\qquad
 h_B(z;c)=\rho_B(G_b(z;c)),\quad h_S(z;c)=\rho_S(G_b(z;c)).
 \label{eq:factor-readouts}
\end{equation}
The readouts specify the factors under test and need not exhaust image
semantics. Occlusion, shadows or relations can couple the two factors, so
independence is relative to the task, not a universal property of scenes.

\paragraph{Statistical and interventional independence.}
For a hypothesized factorization $z=(z_B,z_S)$, statistical independence
$p(z_B,z_S\mid c)=p(z_B\mid c)p(z_S\mid c)$ says nothing by itself about
which image features depend on each component. Independent scalars can still
give the entangled readout $h_B(z)=z_B+z_S$, and correlated coordinates can
drive separate output factors. Our temporal blocks are not assumed to be
$z_B$ and $z_S$.

What matters for a subject edit is a small background response together with
a realized target. For source conditions $z$ and edited conditions $z'$, let
\begin{equation}
 e_B=\|h_B(z';c)-h_B(z;c)\|,\qquad
 e_S=d_S(h_S(z';c),s^\star),
 \label{eq:factor-edit-errors}
\end{equation}
where $s^\star$ is the declared subject target. Selective editing requires
$e_B\le\epsilon_B$ and $e_S\le\epsilon_S$ for thresholds fixed in advance.
An unchanged image makes $e_B$ zero while failing the target, so background
retention alone is insufficient.

\begin{proposition}[Bounded cross-factor response]
\label{prop:factor-response}
Suppose $h_B$ is continuously differentiable along the admissible segment
$(z_B,z_S+t\delta_S)$, $t\in[0,1]$, and its cross-factor Jacobian has
operator norm at most $\kappa_{BS}$ on that segment. Then
\begin{equation}
 \|h_B(z_B,z_S+\delta_S;c)-h_B(z_B,z_S;c)\|
 \le \kappa_{BS}\|\delta_S\|.
 \label{eq:factor-cross-bound}
\end{equation}
The analogous bound with $\kappa_{SB}$ holds for the subject response to a
background edit, and vanishing cross-factor Jacobians on a convex product
domain make each readout invariant to the other coordinate.
\end{proposition}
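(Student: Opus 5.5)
The plan is to apply the fundamental theorem of calculus along the admissible segment and then bound the integrand by the assumed operator-norm bound. Define $\gamma(t)=h_B(z_B,z_S+t\delta_S;c)$ for $t\in[0,1]$. The segment is admissible and $h_B$ is continuously differentiable along it, so $\gamma$ is $C^1$. By the chain rule,
\begin{equation}
 \gamma'(t)=J_{BS}(z_B,z_S+t\delta_S)\,\delta_S,
\end{equation}
where $J_{BS}=\partial h_B/\partial z_S$ is the cross-factor Jacobian. Here $z_B$ is held fixed, so only the $z_S$-block of the derivative contributes.

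Next I will write $\gamma(1)-\gamma(0)=\int_0^1\gamma'(t)\,dt$ as a vector-valued Riemann integral, which is well defined by continuity. I will then use the triangle inequality for integrals in the readout norm, $\|\int\gamma'\|\le\int\|\gamma'\|$. Combining this with $\|J_{BS}\delta_S\|\le\|J_{BS}\|_{\rm op}\|\delta_S\|\le\kappa_{BS}\|\delta_S\|$ pointwise on the segment gives Eq.~\eqref{eq:factor-cross-bound}.

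The one point needing care is the norm. The readout space may be a general finite-dimensional normed space, such as a pixel region or a feature vector, and the operator norm must be the one induced by the chosen norms on $z_S$ and on the readout. The integral triangle inequality holds for any norm, so no inner-product structure is needed. If a purely elementary route is preferred, I would instead pick a dual functional $\ell$ with $\|\ell\|_*=1$ and $\ell(\gamma(1)-\gamma(0))=\|\gamma(1)-\gamma(0)\|$ via Hahn--Banach. I would then apply the scalar mean value theorem to $\ell\circ\gamma$, which avoids the vector integral entirely.

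The symmetric statement follows by exchanging roles. I apply the same argument to $h_S$ along $(z_B+t\delta_B,z_S)$ with bound $\kappa_{SB}$.

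For the invariance claim, let the domain be $\mathcal Z_B\times\mathcal Z_S$ with both factors convex. Fix $z_B$ and any $z_S,z_S'\in\mathcal Z_S$. By convexity of $\mathcal Z_S$, the segment between them stays in the domain, so it is admissible. Applying the bound with $\kappa_{BS}=0$ gives $h_B(z_B,z_S)=h_B(z_B,z_S')$. Hence $h_B$ depends only on $z_B$, and likewise $h_S$ depends only on $z_S$. The product structure is what guarantees that varying one coordinate keeps the other fixed inside the domain.

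I do not expect a real obstacle. The only step to state explicitly is that admissibility, or convexity, keeps every intermediate point in the region where $h_B$ is defined and differentiable. Without this, the endpoint difference could not be controlled by the Jacobian bound.
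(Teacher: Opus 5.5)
Your proposal is correct and follows the paper's own argument: write the background difference as $\int_0^1 (\partial h_B/\partial z_S)(z_B,z_S+t\delta_S;c)\,\delta_S\,dt$, bound the integrand with the operator-norm assumption, and exchange the factors to get the $\kappa_{SB}$ bound. Your convexity argument for the invariance claim and the optional dual-functional route spell out details the paper leaves implicit, but the approach is the same.
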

\begin{proof}
The background difference equals
$\int_0^1 (\partial h_B/\partial z_S)(z_B,z_S+t\delta_S;c)\delta_S\,dt$,
and the operator-norm bound gives Eq.~\eqref{eq:factor-cross-bound}.
Interchanging the factors gives the second bound.
\end{proof}
The statement bounds leakage, not attainment of $s^\star$, and holds only at
a fixed source state. Background LPIPS and subject-replacement proxies
approximate the two sides of Eq.~\eqref{eq:factor-edit-errors} at the
endpoint, rather than measuring a cross-factor Jacobian directly.

\paragraph{Interpreting capability access.}\label{app:elicitation}
A fixed generator and a trained interface form a new composite system.
Separating improved access to a pretrained behavior from newly learned
control relies on a verified fixed executor together with native-prompt and
initialized-interface controls, both of which our experiments include. Finite
native-prompt search cannot show that an unobserved behavior is absent, and
the analogy to post-training in language models is a motivation rather than
a tested mechanism.

\FloatBarrier
\section{Trainability of the compiled interface}
\label{app:trainability}
The results below concern optimizing the interface rather than executing it
(Appendix~\ref{app:proofs}). They show that the learning problem is well posed
and that its reported objective cannot be gamed by its own normalization.
They do not by themselves predict that optimization succeeds.

\paragraph{Setting.}
Fix the generator $\theta$, decoder $D$, observer $Q$, schedule
$0=\tau_0<\cdots<\tau_K=1$ and Boolean masks $M$, and optimize only the
compiler parameters $\phi$. For a replay with source state $\bar x_b$, source
image $I_s$ and keep-mask $m$, the edited endpoint is
$I_e=D(F_{b:N}(\sg[\bar x_b];Z_\phi,M))$ as in Eq.~\eqref{eq:suffix-edit},
and the objective is Eq.~\eqref{eq:edit-objective} with weights
$\lambda_r>0$ and scales $s_r>0$ fixed before the run.

\subsection{The objective is well posed}
\begin{proposition}[Nonnegativity and a uniform lower bound]
\label{prop:well-posed}
Every deployed term is nonnegative, hence $\mathcal L_{\rm edit}\ge 0$
uniformly in $\phi$. Each hinge term is exactly $0$ when its margin is met,
and each presence term is bounded above by $-\log\varepsilon$.
\end{proposition}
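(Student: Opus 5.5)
The argument goes term by term through the deployed objective in Eq.~\eqref{eq:edit-objective}. It then closes with the observation that a nonnegative combination of nonnegative terms is nonnegative. The deployed terms fall into three families, which I would name explicitly from the weights listed in Appendix~\ref{app:current-implementation}:
\begin{itemize}
\item the \emph{hinge} response terms of Eq.~\eqref{eq:response} (response presence, response absence and, presumably, the response-heat variants);
\item the \emph{presence/absence} terminal terms, which I take to be clamped negative log-probabilities of the form $-\log\max(q,\varepsilon)$ or $-\log(1-q+\varepsilon)$-type expressions for observer scores $q\in[0,1]$;
\item the \emph{preservation} terms (preservation, background retention, caption), which are distances or squared distances on the keep-mask $m$, or $1-$cosine similarities.
\end{itemize}
Terms with zero weight (relation, trajectory) drop out and need no treatment.

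\textbf{Hinge terms.} Each has the form $[m-u]_+=\max(0,m-u)$, which is nonnegative by definition for any real $u$. If the margin is met, i.e.\ $u\ge m$, the term is exactly $0$. For $\ell_{\rm gain}$ take $u=q_{\rm new}(I_e)-\sg[q_{\rm new}(I_s)]$, and analogously for $\ell_{\rm remove}$. This gives the second claim of the statement directly.

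\textbf{Presence terms.} With the clamp at $\varepsilon\in(0,1)$, the argument of the logarithm lies in $[\varepsilon,1]$. Since $\log$ is monotone, $-\log(\cdot)\in[0,-\log\varepsilon]$. This yields both nonnegativity and the upper bound $-\log\varepsilon$.

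\textbf{Preservation terms.} Masked norms $\|m\odot(I_e-I_s)\|$ are nonnegative, and so are squared errors and LPIPS-type distances. A $1-\cos$ similarity lies in $[0,2]$.

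\textbf{Assembling the bound.} Since $\lambda_r\ge0$ and $s_r>0$, each summand $\lambda_r\ell_r/s_r\ge0$, so $\mathcal L_{\rm edit}\ge0$ for every $\phi$. The bound is uniform because none of these arguments used any property of $\phi$, of the generator output, or of differentiability. They rely only on the range of each term's outer function.

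The main obstacle is not mathematical but definitional. The statement covers ``every deployed term,'' while the main text only displays the hinge terms explicitly. The proof therefore has to commit to the exact functional form of each deployed term, in particular the $\varepsilon$-clamp on the presence scores and the fact that the observer scores lie in $[0,1]$. I would state these forms as the implemented definitions, for example sigmoid or softmax observer outputs clamped below by $\varepsilon$. I would also note that if any term were instead a signed score difference without a hinge, nonnegativity would fail. That case is excluded by the deployed configuration.
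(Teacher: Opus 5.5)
Your proposal is correct and matches the paper's proof: it enumerates the deployed terms, shows each is nonnegative, concludes by positive combination, and gets the hinge condition from $g\ge m$ and the $-\log\varepsilon$ bound from the clamp. The paper's terms are the clamped $-\log p$ and $-\log(1-p)$, a masked MSE, the $\operatorname{relu}(m-g)$ hinges, a caption term $(1-s_t)+\operatorname{relu}(0.05+s_s-s_t)$, and a softplus relation term, which is also nonnegative, so your dropping it is harmless. One correction: the form $-\log(1-q+\varepsilon)$ you float would be slightly negative at $q=0$, so the argument needs $p$ clamped to $[\varepsilon,1-\varepsilon]$, which is what the deployed code does and which keeps both $p$ and $1-p$ in $[\varepsilon,1]$.
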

\begin{proof}
The deployed terms are $-\log p$ and $-\log(1-p)$ with $p$ clamped to
$[\varepsilon,1-\varepsilon]$, $\varepsilon=10^{-6}$; a masked mean squared
error; $\operatorname{softplus}(\cdot)$; $\operatorname{relu}(m-g)$; and
$(1-s_t)+\operatorname{relu}(0.05+s_s-s_t)$ with $s_\bullet\in[-1,1]$ inner
products of unit-normalized CLIP features. Each is nonnegative, and a positive
combination of nonnegative terms is nonnegative, independently of $\phi$.
The hinge vanishes iff $g\ge m$, and the clamp bounds presence by
$-\log\varepsilon\approx13.82$ (the caption term by $4.05$).
\end{proof}
The proposition concerns the loss surface, not reachability. The terms
compete, so a common minimizer need not exist, and the softplus relation term
is never exactly zero (though of order $10^{-12}$ at the best attainable
margin). What it rules out is an apparent decrease caused by an unbounded
term.

\paragraph{Where the gradient can live.}
Eq.~\eqref{eq:compiler-learning-path} sums only over blocks active after $b$.
The next lemma makes this exclusion exact.
\begin{lemma}[Exact gradient support]
\label{lem:gradient-support}
Let block $k$ have execution interval $[\tau_{k-1},\tau_k)$. If
$\tau_k N\le b$, then $\mathcal L_{\rm edit}$ is constant in $Z_k$, so
$\nabla_{Z_k}\mathcal L_{\rm edit}=0$ identically.
\end{lemma}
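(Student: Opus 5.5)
The plan is to show that $Z_k$ does not appear anywhere in the computation that produces $\mathcal L_{\rm edit}$ once the prefix is detached. A function that does not depend on a variable is constant in it, and its gradient with respect to that variable is then zero. The argument unfolds the resumed suffix $F_{b:N}$ from Eq.~\eqref{eq:execution} and tracks which blocks each update reads.

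\textbf{Which blocks the suffix reads.} By definition, $F_{b:N}(\sg[\bar x_b];Z_\phi,M)$ consists of the updates $x_{n+1}=f_{\theta,n}(x_n;Z_{k(n)},M_{k(n)})$ for $n=b,\dots,N-1$. Step $n$ uses block $k$ exactly when $n/N\in[\tau_{k-1},\tau_k)$; the final update belongs to the last interval, as stated after Eq.~\eqref{eq:execution}. If $\tau_kN\le b\le n$, then $n/N\ge\tau_k$, so $n/N\notin[\tau_{k-1},\tau_k)$. Hence $k(n)\ne k$ for every suffix step. This covers the edge case $k=K$ as well: $\tau_K=1$ would force $b\ge N$, leaving no suffix updates at all. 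So no update in $F_{b:N}$ receives $Z_k$ as an argument.

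\textbf{Induction along the suffix.} Treat $Z_k$ as a free variable and let all other quantities be fixed. The initial state $\sg[\bar x_b]$ is a detached constant: it was computed from the frozen reference program $\bar P(c)$ and carries no dependence on the target $Z_\phi$. Suppose $x_n$ is constant in $Z_k$. Then $x_{n+1}=f_{\theta,n}(x_n;Z_{k(n)},M_{k(n)})$ has inputs that are all independent of $Z_k$, because $k(n)\neq k$, $\theta$ is fixed and $M$ is frozen. So $x_{n+1}$ is constant in $Z_k$ too. By induction up to $n=N$, the endpoint $I_e=D(x_N)$ is independent of $Z_k$.

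\textbf{From the endpoint to the loss.} Each term $\ell_r$ in Eq.~\eqref{eq:edit-objective} is built from a small set of inputs: the observer $Q$ applied to $I_e$, the detached source $I_s$, the fixed keep-mask, the margins, and the fixed scales $s_r$ and weights $\lambda_r$. None of these depends on $Z_k$, so $\mathcal L_{\rm edit}$ is constant in $Z_k$ and its partial derivative vanishes identically.

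\textbf{Main obstacle.} The hardest step is making precise that ``detached'' and ``frozen'' really mean constant as functions of $Z_k$. In particular, one has to check that the source state and the source image come from the frozen reference compiler rather than from $Z_\phi$. One also has to check that no loss term, such as a caption or CLIP term, reads the compiled blocks directly rather than only through $I_e$. I would verify this term by term against the list in Proposition~\ref{prop:well-posed}.

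Finally, I would note the limitation already made in the text: the lemma concerns $\nabla_{Z_k}$, not $\nabla_\phi$. Through shared parameters, $\phi$ still affects $Z_k$ on later rollouts. What the lemma establishes is that the chain-rule sum in Eq.~\eqref{eq:compiler-learning-path} correctly omits these inactive blocks.
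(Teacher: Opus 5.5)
Your proposal is correct and follows the paper's proof: every suffix step $n\ge b$ reads only $Z_{k(n)}$ with $k(n)\ne k$, and the prefix is detached, so nothing in $\mathcal L_{\rm edit}$ depends on $Z_k$. You spell out two things the paper leaves implicit, namely the induction along the suffix and the check that no loss term reads the compiled blocks except through $I_e$ (the caption term, for instance, uses detached observer text features).
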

\begin{proof}
By Eq.~\eqref{eq:execution} step $n$ reads only $Z_{k(n)}$. If
$\tau_kN\le b$ then $k(n)\ne k$ for every $n\ge b$, and the prefix is
detached, so nothing in the loss depends on $Z_k$.
\end{proof}
For the $6/12/12$ schedule at $N=30$ the block boundaries are steps 6, 18 and
30, so at $b=6$ the first block receives exactly zero gradient. This is the
formal counterpart of the observation in Appendix~\ref{app:b0-isolation} that
a background change requested after the boundary must be expressed by the
suffix. Because $\phi$ is shared, the lemma does not imply that block $k$'s
output is unchanged after an update, so same-prompt drift is measured
separately.

\subsection{Why the objective is frozen before training}
The scales $s_r$ are measured once before training and then held fixed
(Appendix~\ref{app:current-implementation}).
\begin{proposition}[Adaptive normalization admits no fixed objective]
\label{prop:adaptive-normalization}
If term $r$ is reported as $\ell_r^{(t)}/s_r^{(t)}$ with $s_r^{(t)}>0$ a
statistic of the optimization history, there exist sequences in which
$\ell_r^{(t)}$ strictly increases while the reported value strictly decreases
to $0$. A monotone decrease of the reported value then certifies nothing
about the modelled quantity.
\end{proposition}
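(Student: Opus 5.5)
The statement is existential, so I would prove it by an explicit construction. The construction only needs to respect the constraints the proposition imposes: $\ell_r^{(t)}$ strictly increasing, $s_r^{(t)}>0$, $s_r^{(t)}$ a statistic of the optimization history, and $\ell_r^{(t)}/s_r^{(t)}$ strictly decreasing to $0$.

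\textbf{Choice of sequence and statistic.} I take the loss sequence $\ell_r^{(t)}=2-2^{-t}$ for $t\ge 0$. It is strictly increasing, positive, and bounded in $[1,2)$, which keeps it compatible with the bounded nonnegative terms of Proposition~\ref{prop:well-posed}. For the scale I want a statistic that reads only the recorded history $(\ell_r^{(0)},\dots,\ell_r^{(t)})$. A natural normalizer would be the running maximum, but that only gives a ratio of $1$. So I instead use a cumulative statistic, the running sum
\[
s_r^{(t)}=\sum_{u=0}^{t}\ell_r^{(u)}\ \Bigl(\ge t+1\Bigr).
\]

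\textbf{Verifying the reported value.} The reported value is then $R^{(t)}=\ell_r^{(t)}/s_r^{(t)}\le 2/(t+1)\to 0$. For strict decrease I would check
\[
R^{(t+1)}<R^{(t)}\iff \ell^{(t+1)}s^{(t)}<\ell^{(t)}\bigl(s^{(t)}+\ell^{(t+1)}\bigr).
\]
This rearranges to $(\ell^{(t+1)}-\ell^{(t)})\,s^{(t)}<\ell^{(t)}\ell^{(t+1)}$. The left side is at most $2^{-t-1}\cdot 2(t+1)$, which is below $1$ for all $t\ge 0$: at $t=0$ it equals $1\cdot 1=1$ against a right side of $1\cdot 1.5$, and it decreases thereafter. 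The right side is at least $1$. So the inequality holds, with the small cases checked directly.

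\textbf{Main obstacle.} The hard part is making the strict-decrease inequality hold at every $t$ rather than only asymptotically. This is why I choose an $\ell$ whose increments decay geometrically while $s$ grows linearly. If the bookkeeping fails at early indices, my fallback is an explicit exponential normalizer $s^{(t)}=\ell^{(t)}e^{t}$. That choice is still a deterministic function of the history, since $t$ equals the history length, and it gives $R^{(t)}=e^{-t}$ trivially.

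\textbf{Closing remark.} The final sentence of the statement follows because the same reported trajectory is compatible with a strictly worsening $\ell_r$. A decrease in $R^{(t)}$ therefore carries no information about the modelled quantity, in contrast to the fixed $s_r$ of Eq.~\eqref{eq:edit-objective}.
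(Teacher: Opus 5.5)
Your construction is correct, but it differs from the paper's. The paper uses the one-line example $\ell_r^{(t)}=t$, $s_r^{(t)}=t^2$ (implicitly $t\ge 1$), so the reported value is $1/t$ and there is nothing to verify. It then closes by noting that, because the objective changes whenever $s_r^{(t)}$ does, the updates are not gradient steps on one fixed function. That is the sense of ``no fixed objective'' in the title, and your closing remark leaves it implicit.

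Since $t^2=\ell_r^{(t)}\cdot t$, the paper's scale is really a deterministic schedule in the iteration index. Your fallback $s^{(t)}=\ell^{(t)}e^{t}$ is the same idea with $e^{t}$ in place of $t$. Your primary construction buys something the paper's does not. The running sum genuinely reads the recorded loss values, and $\ell_r^{(t)}=2-2^{-t}$ stays in $[1,2)$, whereas the paper's raw term is unbounded. You therefore show that the pathology persists for a data-dependent normalizer and for bounded terms of the kind in Proposition~\ref{prop:well-posed}. The price is the strict-decrease bookkeeping.

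One step of that bookkeeping is misstated. The bound $2^{-t-1}\cdot 2(t+1)=(t+1)2^{-t}$ equals $1$ at both $t=0$ and $t=1$, so it is not ``below $1$ for all $t\ge 0$''. Combined with ``the right side is at least $1$'', it yields only a non-strict inequality. The repair is immediate: $\ell^{(t)}\ell^{(t+1)}\ge\ell^{(0)}\ell^{(1)}=3/2$, so $(\ell^{(t+1)}-\ell^{(t)})\,s^{(t)}\le(t+1)2^{-t}\le 1<3/2\le\ell^{(t)}\ell^{(t+1)}$ for every $t\ge 0$. Alternatively, use $s^{(t)}<2(t+1)$ strictly. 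With that fix, $R^{(t)}$ is strictly decreasing, and $R^{(t)}<2/(t+1)$ gives the limit $0$.
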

\begin{proof}
Take $\ell_r^{(t)}=t$ and $s_r^{(t)}=t^2$. The raw sequence grows without
bound while the reported sequence $1/t$ decreases to $0$. Since the objective
changes whenever $s_r^{(t)}$ does, the updates are not gradient steps on one
fixed function.
\end{proof}
\begin{corollary}[Frozen scales preserve the ordering]
\label{cor:frozen-order}
If $s_r$ is a positive constant, the reported term decreases if and only if
the raw term decreases, and $\mathcal L_{\rm edit}$ is one fixed function of
$\phi$.
\end{corollary}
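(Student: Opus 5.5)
Because division by a fixed positive constant is strictly increasing, the proof reduces to that monotonicity fact. I would prove the corollary's two claims separately. For the ordering claim, fix term $r$ with $s_r>0$ constant across the run and compare two optimization steps $t<t'$. Since $x\mapsto x/s_r$ is a strictly increasing bijection of $\R$, $\ell_r^{(t')}/s_r<\ell_r^{(t)}/s_r$ holds iff $\ell_r^{(t')}<\ell_r^{(t)}$. The same equivalence holds with $\le$ and with equality, so reported and raw sequences have identical orderings at every pair of steps. In particular, monotone decrease of one is equivalent to monotone decrease of the other. This is precisely the failure mode of Proposition~\ref{prop:adaptive-normalization}: there, a time-varying $s_r^{(t)}$ broke the equivalence, and the construction $\ell_r^{(t)}=t$, $s_r^{(t)}=t^2$ cannot arise when the denominator is constant.

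For the fixed-function claim, I would write $\mathcal L_{\rm edit}(\phi)=\sum_{r\in\mathcal R}(\lambda_r/s_r)\,\ell_r(\phi)$ as in Eq.~\eqref{eq:edit-objective}. Here each $\ell_r$ depends only on $\phi$ and on the fixed data of the Setting: $\theta$, $D$, $Q$, the schedule, the masks $M$, the detached $\bar x_b$, $I_s$ and $m$. The coefficients $\lambda_r/s_r$ are constants fixed before the run (Appendix~\ref{app:current-implementation}). So $\mathcal L_{\rm edit}$ is a single map $\phi\mapsto\R$ that does not depend on the iteration index. Consequently the gradients are $\nabla_\phi\mathcal L_{\rm edit}=\sum_r(\lambda_r/s_r)\nabla_\phi\ell_r$, whose support is given by Eq.~\eqref{eq:compiler-learning-path} and Lemma~\ref{lem:gradient-support}. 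Every update is therefore a gradient step on this one function. This answers the last sentence of the proof of Proposition~\ref{prop:adaptive-normalization}.

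The only delicate point is the scope of ``the reported term decreases''. For the individual terms the equivalence is exact. For the aggregate $\mathcal L_{\rm edit}$, frozen scales alone do not make its decrease equivalent to decrease of each raw $\ell_r$, because terms can trade off against one another. I would therefore state the ordering claim per term. For the aggregate, I would claim only that it is the fixed weighted sum, and that its changes are changes in that fixed positive combination of raw terms.
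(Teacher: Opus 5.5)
Your proposal is correct and follows the paper's reasoning. The paper gives no separate proof: the corollary is the immediate consequence of dividing by a fixed positive constant, which is strictly increasing, together with $\lambda_r$ and $s_r$ being fixed before the run. This removes the moving-objective failure exhibited in Proposition~\ref{prop:adaptive-normalization}, and your per-term reading of ``the reported term decreases'' and your caveat about the aggregate are consistent with the paper's wording.
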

Raw training terms are therefore checked from the first checkpoint against
preregistered directions, and a run whose raw preservation rises is stopped.
The same reasoning explains why preservation is scored against the paired
counterfactual from the source program rather than against a photograph.
Against a photograph the term has a floor that no edit can remove, measured at
$0.1405$ before the encoder has moved and $0.150$ after $700$ updates, which
reflects the gap between a generator and a camera rather than edit damage.

\subsection{What the detached prefix does and does not protect}
\begin{proposition}[No reference-moving shortcut]
\label{prop:no-shortcut}
The source state $\bar x_b$, source image $I_s$, keep-mask $m$ and observer
text features enter $\mathcal L_{\rm edit}$ only through $\sg[\cdot]$, so no
gradient step can reduce the loss by moving the target it is scored against.
\end{proposition}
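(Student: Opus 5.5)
The plan is to write the loss as a composite function and track which arguments carry a derivative with respect to $\phi$. Concretely, I would express $\mathcal L_{\rm edit}(\phi)=\Lambda\bigl(I_e(\phi),\sg[I_s],\sg[m],\sg[T]\bigr)$, where $T$ collects the observer text features. The edited endpoint is $I_e(\phi)=D(F_{b:N}(\sg[\bar x_b];Z_\phi,M))$ as in Eq.~\eqref{eq:suffix-edit}, and $\Lambda$ is the fixed positive combination of Eq.~\eqref{eq:edit-objective}. Corollary~\ref{cor:frozen-order} makes $\Lambda$ well defined here, because with frozen scales it is one fixed function of its arguments.

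The first step is to check, term by term, that every deployed term has this form. The terms are those listed in the proof of Proposition~\ref{prop:well-posed}: clamped log-presence terms, the masked MSE, the relation softplus, the hinges of Eq.~\eqref{eq:response}, and the CLIP caption term. In each one, every occurrence of $I_s$, $m$, $q_\bullet(I_s)$ or $T$ is wrapped in $\sg[\cdot]$, exactly as written in Eq.~\eqref{eq:response}. The source state enters only as $\sg[\bar x_b]$. Since $\sg$ is the identity in the forward pass and has zero Jacobian in the backward pass, the chain rule gives
\[
\nabla_\phi\mathcal L_{\rm edit}=\Bigl(\frac{\partial I_e}{\partial\phi}\Bigr)^{\!\top}\nabla_{I_e}\Lambda,
\]
with no term through the reference arguments. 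This recovers Eq.~\eqref{eq:compiler-learning-path}, and combining it with Lemma~\ref{lem:gradient-support} restricts the sum to the blocks active after $b$.

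The second step establishes the claimed consequence. Every gradient step updates $\phi$ only, and the reference values $\bar x_b$, $I_s$, $m$ and $T$ are not functions of $\phi$ at all. They are produced by the frozen initial compiler (Appendix~\ref{app:current-implementation}) and by the frozen observer. So their numerical values at iterate $t+1$ equal those at iterate $t$. Any decrease of $\Lambda$ between iterates must therefore come from a change in $I_e$ alone, with the scoring target held fixed.

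The main obstacle is that $\sg$ only blocks gradients. A step that is not moving the reference in gradient also requires that the reference values do not depend on $\phi$ through the forward pass, and that is a fact about the implementation rather than something the stop-gradient guarantees. So I would state an explicit hypothesis, namely that the source is produced by a frozen copy, and cite that provenance. I would also add a remark that $\phi$ is shared across blocks. Proposition~\ref{prop:no-shortcut} therefore says nothing about fresh rollouts, where same-prompt drift must be measured separately.
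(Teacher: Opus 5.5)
Your proposal is correct and follows the paper's argument: $\sg$ has zero Jacobian, and a term-by-term check shows that every reference argument ($\bar x_b$, $I_s$ and $q_\bullet(I_s)$, the keep-mask, the caption text features) is detached, so the gradient reaches $\phi$ only through $I_e$. Your added step is a worthwhile sharpening that the paper's one-line proof leaves to the frozen-source provenance in Appendix~\ref{app:current-implementation}: $\sg$ blocks only derivatives, so the conclusion about the loss between iterates also needs the reference values to be $\phi$-independent in the forward pass.
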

\begin{proof}
Immediate from $\partial\,\sg[u]/\partial\phi=0$. Preservation against
$\sg[I_s]$, the response hinges against $\sg[q(I_s)]$ and the caption term
against detached text features all have fixed second arguments.
\end{proof}
Three further shortcuts are excluded by construction. The keep-mask is not
re-derived from the edited image, which would let the optimizer shrink the
penalized region by hiding the subject. The response terms hinge on a gain
relative to the detached source, so inflating presence everywhere does not
help. When both ends of a relation resolve to the same detector anchor, the
pair is dropped and reported as \texttt{relation\_degenerate\_fraction}.

\subsection{Cost accounting}
\begin{lemma}[Suffix denoiser calls]
\label{lem:nfe}
With $N$ steps, classifier-free guidance as one conditional and one
unconditional pass, and a branch at step $b$, resuming a saved prefix costs
$2(N-b)$ denoiser evaluations. The FlowEdit adaptation evaluates guided
velocities for a source and a target condition and costs $4(N-b)$, twice as
many at every $b$.
\end{lemma}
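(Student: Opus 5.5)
The count follows from Proposition~\ref{prop:prefix} and Eq.~\eqref{eq:execution}. Resuming from a saved state $\bar x_b$ needs no recomputation of the prefix: the saved state is the complete sampler state, and for the deterministic sampler no other history is needed. The resumed computation is therefore exactly the updates $f_{\theta,n}$ for $n=b,\dots,N-1$, which is $N-b$ sampler updates. With classifier-free guidance modelled as one conditional and one unconditional pass, each update makes exactly two denoiser evaluations. Their outputs are combined linearly, at no further cost. The suffix therefore costs $2(N-b)$ evaluations.

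For the FlowEdit adaptation, I would write out its per-step update as implemented on SANA. At each step $n\ge b$ it forms a velocity difference $v^{\rm tar}-v^{\rm src}$. Each of the two terms is a guided velocity, one under the target condition and one under the source condition, evaluated at its own state: the source velocity at the noised source image and the target velocity at the current edit state. Each guided velocity costs two evaluations, so the step costs four, and summing over the $N-b$ steps gives $4(N-b)$. I would fix one averaging sample per step, since that is the adaptation's default. With $n_{\rm avg}$ samples the count would instead scale as $4n_{\rm avg}(N-b)$, and I would state this as the convention behind the lemma.

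The ratio $4(N-b)/2(N-b)=2$ holds for every $b<N$. At $b=N$ both costs are zero, so the claim is read for $b<N$ or as the identity $4(N-b)=2\cdot 2(N-b)$.

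The only real obstacle is definitional. I need to check that neither the unconditional pass nor the source velocity is shared or cached across the two FlowEdit branches. Because the source and target velocities are evaluated at different states, an unconditional pass from one cannot be reused for the other. I would state this explicitly, and exclude decoder and encoder calls from the count.
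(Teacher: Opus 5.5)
Your proposal is correct and matches the paper's argument: the paper simply charges each of the $N-b$ remaining steps one guided evaluation (two passes) for prefix resumption, and two guided evaluations (four passes) for FlowEdit. Alongside your $n_{\rm avg}=1$ convention, the four-passes-per-step count also relies on the adaptation's $n_{\rm min}=0$ setting, which the paper fixes in its protocol; a terminal target-only interval would cost only two passes per step.
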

\begin{proof}
Each remaining step performs one guided evaluation (two passes), or two for
FlowEdit (four passes).
\end{proof}
At $b\in\{0,6,9,15,21\}$ with $N=30$ this gives $60/48/42/30/18$ against
$120/96/84/60/36$. These are suffix costs only; source generation, text
encoding and VAE encode/decode are excluded. Native shared-state reuse has
the same $2(N-b)$ cost, and call counts do not translate directly into
wall-clock time.

\subsection{Scope}
These results are structural. They concern the objective, its gradient
support, its normalization and its arithmetic, and assert no convergence,
rate or stationary point. The map $\phi\mapsto\mathcal L_{\rm edit}$ runs
through a deep sampler, a decoder and a detector-based observer with clamps,
hinges and a hard mask, and is neither convex nor globally smooth. A
standalone arithmetic script released with the code checks every numeric bound, the counterexample and the call counts without executing
the model.

\FloatBarrier
\section{Comparison protocols and independent evaluation}
\label{app:comparative-evaluation}
\paragraph{Source access defines the comparison.}
Full regeneration and learned prefix editing use identical target conditions
and noise, and differ only in how much source history is retained. The native
suffix control starts from the same saved state and changes the conditioning
representation. Image-input methods receive the source PNG and native
captions, without the stored sampler state.

\begin{table}[!htbp]\centering\small
\caption{Roles of the comparison methods.}
\begin{tabular}{p{.25\linewidth}p{.28\linewidth}p{.36\linewidth}}\toprule
Method & Source access & Question\\\midrule
Full regeneration & Same initial noise & Value of retaining source history\\
Native suffix & Same saved state & Value of learned future conditions\\
Initialized compiler & Same saved state & Effect of adaptation\\
SDEdit--SANA adaptation & Source PNG, re-noising & Image-input denoising control\\
FlowEdit--SANA adaptation & Source PNG, source/target velocities & Image-input flow editing control\\
\bottomrule\end{tabular}\end{table}

\paragraph{Image-input adaptations.}
We implement SDEdit-style re-noising and the FlowEdit velocity-difference
update on the same SANA generator \citep{meng2021sdedit,kulikov2025flowedit}.
The source PNG is VAE-encoded deterministically, and native source and target
captions use guidance 4 with the same 30-step schedule. Start indices 9, 15
and 21 are all retained. FlowEdit uses one noise sample per update and no
terminal target-only interval ($n_{\rm avg}=1$, $n_{\rm min}=0$,
$n_{\rm max}=21,15,9$), with the reference implementation pinned to commit
\texttt{2620a9364f0f}. These adaptations are not reproductions of the
published SD3/FLUX benchmarks. All 216 source cases were edited by both
methods at all three strengths, giving 1296 edited images and 216 VAE
reconstructions, with no output filtering. Call counts follow
Lemma~\ref{lem:nfe}.

\paragraph{Independent detector calibration.}
Two frozen category detectors, Grounding DINO tiny \citep{liu2023groundingdino} and an OWLv2 base
ensemble \citep{minderer2023owlv2}, are distinct from the training YOLO observer. On COCO val2017,
image identities are hash-split into 1327 fit and 1352 audit queries over 23
categories, with up to 30 positives and 30 negatives per category and split.
The global threshold maximizes fit localization F1 subject to precision at
least 0.90. A positive requires a top detection with IoU at least 0.5 against
an annotated object, and a wrongly localized positive counts as both a false
positive and a false negative.
\begin{table}[!htbp]\centering\small
\caption{Independent detector calibration on public images, not
generated-editing accuracy.}\label{tab:detector-calibration}
\begin{tabular}{lrrr}\toprule
Detector & Threshold & Audit precision & Audit recall\\\midrule
Grounding DINO tiny &0.40&0.886&0.819\\
OWLv2 base ensemble &0.31&0.900&0.861\\
\bottomrule\end{tabular}\end{table}
Both detectors fall short for banana, bench, bicycle, toaster, traffic light
and truck, and Grounding DINO also for cat and tennis racket. Calibration on
photographs does not establish accuracy on generated hybrids, attributes or
relations.

\paragraph{Generated-image category evidence.}
The image/query manifest was fixed before scoring and contains 2592 logical
records, giving 4752 raw calls per detector after deduplication. The two new
native schedules add 432 outputs and 1584 raw calls, reusing the frozen
source scores, and all new outputs are checksum-matched to the original images.
Source and target names that map to the same category, and out-of-vocabulary
replacements, are kept in the raw reports but not counted as validated
replacements. The measures are source detection, target detection, old
category absence and their replacement conjunction.

\begin{figure}[!tbp]\centering
\includegraphics[width=\linewidth]{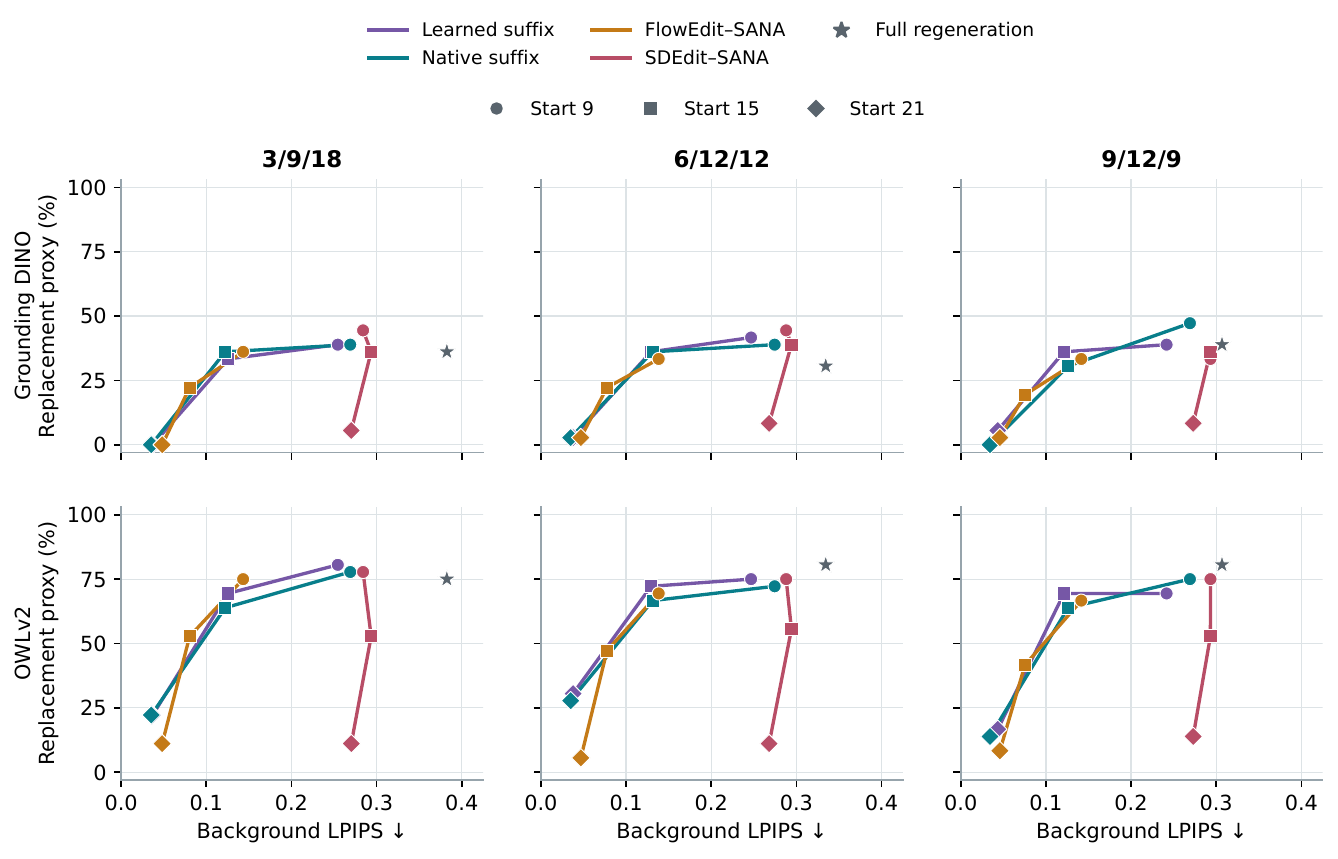}
\caption{\textbf{Preservation and category change on a common evaluation cohort.}
Each point uses the same 12 category families and three seeds. Replacement
proxies use all 36 attempts, and background LPIPS is defined for 32, 33 and
33 source masks across the schedules. Colors identify methods, marker shapes
identify starts 9, 15 and 21, and the star denotes full regeneration. Curves
connect descriptive means, not fitted frontiers, and the two detectors can
order methods differently.}
\label{fig:independent-tradeoff}
\end{figure}
\paragraph{Behavioral separation.}
We use separation in an operational sense, namely retaining one aspect of the
image while modifying another. At the same early saved state, learned
conditions change the background less than native conditioning
(Appendix~\ref{app:native-source-box-regions}), with similar category counts
in the first two schedules. This is a localized improvement in the
retention--change balance, not a uniformly better frontier, and it does not
identify encoder pretraining as the cause, since temporal responses also
occur at initialization.
\paragraph{Complete category and preservation comparison.}
The category cohort contains 12 families and three seeds (36 attempts per schedule) whose source and target categories are distinct and in the calibration vocabulary. The other 12 families (eight out of vocabulary, four within-category attribute edits) stay in the raw reports. Background LPIPS uses the unchanged historical source keep-masks and spatial AlexNet weights, recomputed on the same 36 attempts, with missing masks kept in the denominator. We report target detection (T) and the replacement conjunction (R), which requires the old category in the source, the target in the edit and the old category absent from the edit. Undetected source objects count as failures.
\par
\begin{table}[!htbp]\centering\small
\caption{Category proxies and preservation, schedule 3/9/18. T/R are counts out of 36 per detector; coverage is defined/36. Start is a saved-state branch for suffix methods and a re-noising or flow start for image-input methods.}\label{tab:category-3-9-18}
\begin{tabular}{lrrrrr}\toprule
Method & Start & BG LPIPS$\downarrow$ & Coverage & DINO T/R & OWLv2 T/R\\\midrule
Full regeneration & 0 & 0.383 & 32/36 & 35/13 & 36/27\\
Learned suffix & 9 & 0.254 & 32/36 & 36/14 & 36/29\\
Learned suffix & 15 & 0.125 & 32/36 & 34/12 & 35/25\\
Learned suffix & 21 & 0.037 & 32/36 & 30/0 & 17/8\\
Native suffix & 9 & 0.269 & 32/36 & 36/14 & 36/28\\
Native suffix & 15 & 0.122 & 32/36 & 34/13 & 35/23\\
Native suffix & 21 & 0.035 & 32/36 & 30/0 & 16/8\\
SDEdit--SANA & 9 & 0.284 & 32/36 & 35/16 & 36/28\\
SDEdit--SANA & 15 & 0.294 & 32/36 & 33/13 & 30/19\\
SDEdit--SANA & 21 & 0.270 & 32/36 & 27/2 & 13/4\\
FlowEdit--SANA & 9 & 0.143 & 32/36 & 34/13 & 35/27\\
FlowEdit--SANA & 15 & 0.080 & 32/36 & 31/8 & 32/19\\
FlowEdit--SANA & 21 & 0.048 & 32/36 & 24/0 & 15/4\\
\bottomrule\end{tabular}\end{table}
\begin{table}[!htbp]\centering\small
\caption{As Table~\ref{tab:category-3-9-18}, schedule 6/12/12.}\label{tab:category-6-12-12}
\begin{tabular}{lrrrrr}\toprule
Method & Start & BG LPIPS$\downarrow$ & Coverage & DINO T/R & OWLv2 T/R\\\midrule
Full regeneration & 0 & 0.334 & 33/36 & 34/11 & 35/29\\
Learned suffix & 9 & 0.247 & 33/36 & 35/15 & 36/27\\
Learned suffix & 15 & 0.129 & 33/36 & 34/13 & 33/26\\
Learned suffix & 21 & 0.038 & 33/36 & 31/1 & 23/11\\
Native suffix & 9 & 0.274 & 33/36 & 35/14 & 36/26\\
Native suffix & 15 & 0.131 & 33/36 & 35/13 & 33/24\\
Native suffix & 21 & 0.035 & 33/36 & 31/1 & 21/10\\
SDEdit--SANA & 9 & 0.288 & 33/36 & 35/16 & 36/27\\
SDEdit--SANA & 15 & 0.294 & 33/36 & 34/14 & 30/20\\
SDEdit--SANA & 21 & 0.268 & 33/36 & 29/3 & 19/4\\
FlowEdit--SANA & 9 & 0.138 & 33/36 & 33/12 & 34/25\\
FlowEdit--SANA & 15 & 0.078 & 33/36 & 33/8 & 30/17\\
FlowEdit--SANA & 21 & 0.047 & 33/36 & 27/1 & 18/2\\
\bottomrule\end{tabular}\end{table}
\begin{table}[!htbp]\centering\small
\caption{As Table~\ref{tab:category-3-9-18}, schedule 9/12/9.}\label{tab:category-9-12-9}
\begin{tabular}{lrrrrr}\toprule
Method & Start & BG LPIPS$\downarrow$ & Coverage & DINO T/R & OWLv2 T/R\\\midrule
Full regeneration & 0 & 0.307 & 33/36 & 35/14 & 35/29\\
Learned suffix & 9 & 0.242 & 33/36 & 35/14 & 34/25\\
Learned suffix & 15 & 0.121 & 33/36 & 35/13 & 34/25\\
Learned suffix & 21 & 0.043 & 33/36 & 30/2 & 21/6\\
Native suffix & 9 & 0.269 & 33/36 & 36/17 & 36/27\\
Native suffix & 15 & 0.126 & 33/36 & 36/11 & 34/23\\
Native suffix & 21 & 0.034 & 33/36 & 31/0 & 19/5\\
SDEdit--SANA & 9 & 0.293 & 33/36 & 35/12 & 36/27\\
SDEdit--SANA & 15 & 0.293 & 33/36 & 32/13 & 29/19\\
SDEdit--SANA & 21 & 0.273 & 33/36 & 27/3 & 17/5\\
FlowEdit--SANA & 9 & 0.142 & 33/36 & 34/12 & 33/24\\
FlowEdit--SANA & 15 & 0.075 & 33/36 & 30/7 & 29/15\\
FlowEdit--SANA & 21 & 0.046 & 33/36 & 28/1 & 13/3\\
\bottomrule\end{tabular}\end{table}
\paragraph{Detector disagreement and paired uncertainty.}
Before editing, Grounding DINO already detects the target category in 12, 13 and 13 of the 36 sources across schedules, against 1, 0 and 0 for OWLv2, and misses the source category in 3, 3 and 4 cases that OWLv2 detects. We therefore report both detectors. At branch 9, the learned-minus-native replacement differences in percentage points, with descriptive 95\% family-bootstrap intervals over 10000 resamples of the 12 family means, are as follows. For 3/9/18, DINO $+0.00$ [$-8.33$, $+8.33$] and OWLv2 $+2.78$ [$-5.56$, $+11.11$]. For 6/12/12, DINO $+2.78$ [$-11.11$, $+19.44$] and OWLv2 $+2.78$ [$-5.56$, $+11.11$]. For 9/12/9, DINO $-8.33$ [$-22.22$, $+0.00$] and OWLv2 $-5.56$ [$-16.67$, $+5.56$]. The ordering changes across detector and schedule, and none of these intervals establishes noninferiority.
\paragraph{Image-input controls on all evaluation cases.}
\begin{table}[!htbp]\centering\small
\caption{Background LPIPS of image-input SANA adaptations over all 72 source cases per schedule, using the historical source masks.}
\begin{tabular}{llrrrr}\toprule
Schedule & Method & Start 9 & Start 15 & Start 21 & Coverage\\\midrule
3/9/18 & SDEdit--SANA & 0.265 & 0.288 & 0.269 & 63/72\\
3/9/18 & FlowEdit--SANA & 0.122 & 0.066 & 0.041 & 63/72\\
6/12/12 & SDEdit--SANA & 0.273 & 0.291 & 0.273 & 61/72\\
6/12/12 & FlowEdit--SANA & 0.122 & 0.067 & 0.042 & 61/72\\
9/12/9 & SDEdit--SANA & 0.282 & 0.296 & 0.282 & 60/72\\
9/12/9 & FlowEdit--SANA & 0.127 & 0.066 & 0.042 & 60/72\\
\bottomrule\end{tabular}\end{table}
VAE reconstruction alone gives background LPIPS 0.0315, 0.0325 and 0.0330 for 3/9/18, 6/12/12 and 9/12/9. At the same nominal start, FlowEdit often retains more background than the learned suffix, while intermediate learned suffixes can reach higher replacement counts, and SDEdit changes more background. These are tradeoffs across different source-access protocols, not an overall ranking.

\paragraph{Automated multimodal judgments.}
The pairwise evaluation is a local adaptation of the GEditBench v2
instruction, consistency and quality rubrics \citep{jiang2026geditv2}, with a
Qwen judge in place of PVC-Judge and GPT. We report both image orders and keep
order disagreement, ties and missing calls separate. The 10368 completed calls
show no consistent trained-compiler advantage and substantial order
disagreement (Appendix~\ref{app:v13-final-evaluation}).

\FloatBarrier

\part*{Part II\quad Controlled setting}
\section{Absolute region errors in the toy case}
\label{app:toy-absolute-mse}
Figure~\ref{fig:toy-absolute-mse} retains the absolute measurements underlying
the pointwise ratios in Figure~\ref{fig:toy-mse-trajectory}. Normalization is
performed separately at each noise level using the same single-expert
reference. No data points are removed or smoothed. The additional trajectory
frames below the absolute curves show the same selected condition.
\begin{figure}[htbp]
\centering
\includegraphics[width=\linewidth]{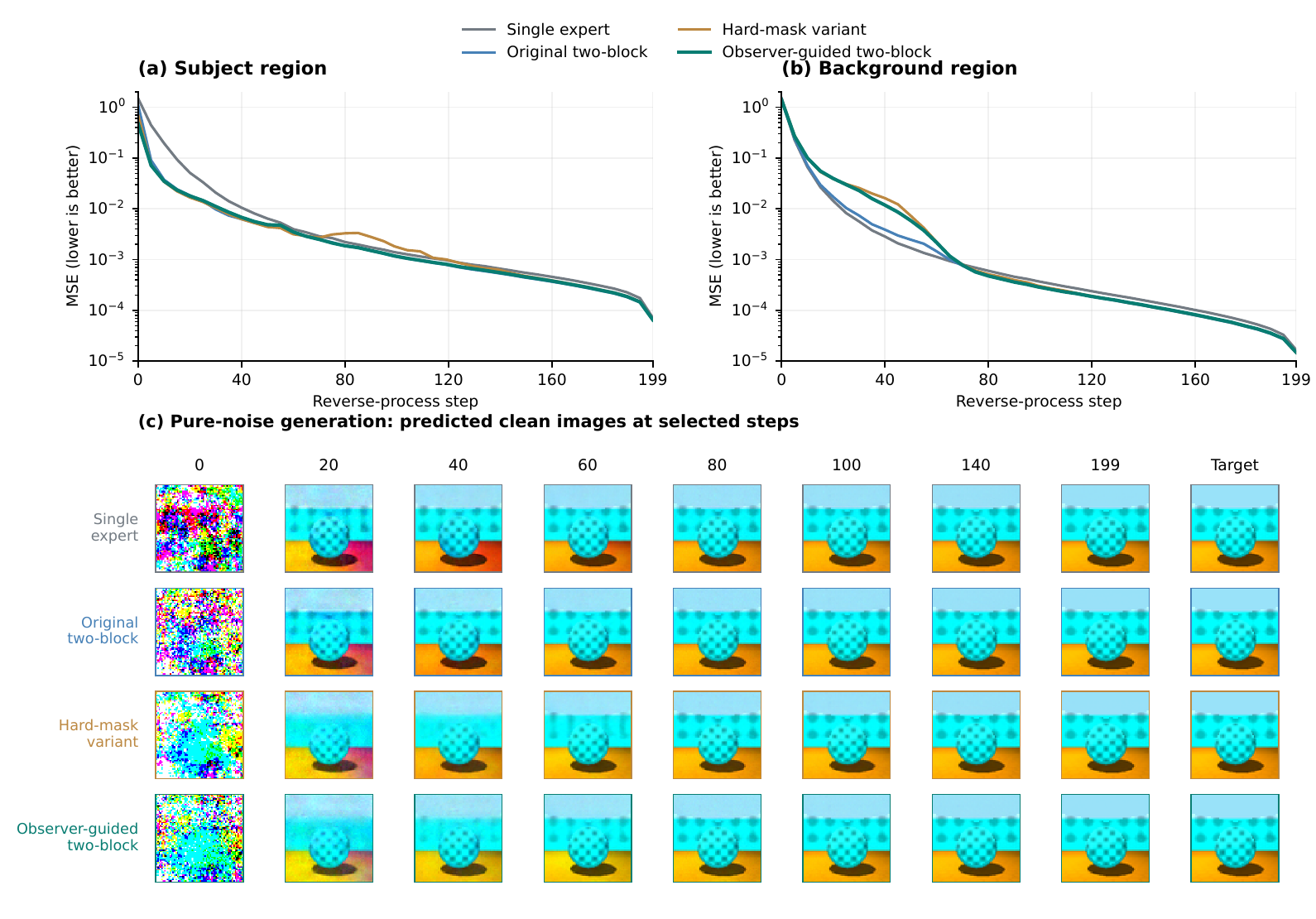}
\caption{Absolute subject and background MSE on logarithmic scales, before
normalization, with the fuller eight-step qualitative trajectory. The
quantitative noised-reference evaluation and the selected pure-noise
trajectory are separate experiments.}
\label{fig:toy-absolute-mse}
\end{figure}

\FloatBarrier

\part*{Part III\quad Editing with SANA}
\section{Final edit-only evaluation}
\label{app:v13-final-evaluation}
This section reports the three final checkpoints on the main evaluation set
and a challenge set.
\subsection{Objective measurements}
CLIP shares the training observer family, and background metrics are
conditional on accepted source masks. Each entry shows the defined count
against the attempted count, and family-bootstrap uncertainty is kept in the
archived summary.
\begin{table}[!htbp]
\centering\scriptsize\setlength{\tabcolsep}{3pt}
\caption{Final edit-only 24-family diagnostics: means (defined/attempted cases). Three seeds per request; step 0 is full regeneration. PSNR with zero MSE is undefined numerically and excluded from its mean.}\label{tab:v13-final-generation}
\begin{tabular}{lcccc}\toprule
Schedule; branch & CLIP $\Delta$ & BG LPIPS$\downarrow$ & BG PSNR$\uparrow$ & BG SSIM$\uparrow$\\\midrule
3/9/18; 0 & 0.097 (72/72) & 0.352 (63/72) & 15.652 (63/72) & 0.572 (63/72)\\
3/9/18; 9 & 0.092 (72/72) & 0.225 (63/72) & 19.073 (63/72) & 0.701 (63/72)\\
3/9/18; 15 & 0.076 (72/72) & 0.108 (63/72) & 24.316 (63/72) & 0.837 (63/72)\\
3/9/18; 21 & 0.028 (72/72) & 0.030 (63/72) & 31.710 (63/72) & 0.946 (63/72)\\
6/12/12; 0 & 0.092 (72/72) & 0.321 (61/72) & 16.500 (61/72) & 0.599 (61/72)\\
6/12/12; 9 & 0.088 (72/72) & 0.227 (61/72) & 19.096 (61/72) & 0.697 (61/72)\\
6/12/12; 15 & 0.076 (72/72) & 0.113 (61/72) & 24.322 (61/72) & 0.827 (61/72)\\
6/12/12; 21 & 0.029 (72/72) & 0.033 (61/72) & 31.156 (61/72) & 0.940 (61/72)\\
9/12/9; 0 & 0.086 (72/72) & 0.301 (60/72) & 17.407 (60/72) & 0.625 (60/72)\\
9/12/9; 9 & 0.086 (72/72) & 0.230 (60/72) & 19.477 (60/72) & 0.703 (60/72)\\
9/12/9; 15 & 0.073 (72/72) & 0.113 (60/72) & 24.517 (60/72) & 0.828 (60/72)\\
9/12/9; 21 & 0.028 (72/72) & 0.040 (60/72) & 30.372 (60/72) & 0.929 (60/72)\\
\bottomrule\end{tabular}
\end{table}
\begin{table}[!htbp]
\centering\scriptsize\setlength{\tabcolsep}{3pt}
\caption{As Table~\ref{tab:v13-final-generation}, for the 24-case challenge pool.}\label{tab:v13-final-challenge}
\begin{tabular}{lcccc}\toprule
Schedule; branch & CLIP $\Delta$ & BG LPIPS$\downarrow$ & BG PSNR$\uparrow$ & BG SSIM$\uparrow$\\\midrule
3/9/18; 0 & 0.049 (72/72) & 0.415 (10/72) & 15.553 (10/72) & 0.561 (10/72)\\
3/9/18; 9 & 0.042 (72/72) & 0.198 (10/72) & 21.826 (10/72) & 0.780 (10/72)\\
6/12/12; 0 & 0.048 (72/72) & 0.451 (11/72) & 15.372 (11/72) & 0.538 (11/72)\\
6/12/12; 9 & 0.041 (72/72) & 0.244 (11/72) & 20.267 (11/72) & 0.729 (11/72)\\
9/12/9; 0 & 0.047 (72/72) & 0.425 (12/72) & 16.233 (12/72) & 0.598 (12/72)\\
9/12/9; 9 & 0.042 (72/72) & 0.252 (12/72) & 20.327 (12/72) & 0.745 (12/72)\\
\bottomrule\end{tabular}
\end{table}
\begin{table}[!htbp]
\centering\scriptsize\setlength{\tabcolsep}{3pt}
\caption{Same-prompt drift at the final checkpoint: eight evaluation families, three seeds. No target edit is requested; these distances measure change from the frozen-source continuation.}\label{tab:v13-final-noop}
\begin{tabular}{lccc}\toprule
Schedule; branch & Whole LPIPS & PNG MSE & Latent MSE\\\midrule
3/9/18; 9 & 0.050 (24/24) & 0.003 (24/24) & 0.020 (24/24)\\
3/9/18; 15 & 0.018 (24/24) & 0.001 (24/24) & 0.004 (24/24)\\
3/9/18; 21 & 0.005 (24/24) & 0.000 (24/24) & 0.000 (24/24)\\
6/12/12; 9 & 0.057 (24/24) & 0.003 (24/24) & 0.019 (24/24)\\
6/12/12; 15 & 0.031 (24/24) & 0.001 (24/24) & 0.006 (24/24)\\
6/12/12; 21 & 0.012 (24/24) & 0.000 (24/24) & 0.001 (24/24)\\
9/12/9; 9 & 0.088 (24/24) & 0.005 (24/24) & 0.037 (24/24)\\
9/12/9; 15 & 0.037 (24/24) & 0.002 (24/24) & 0.008 (24/24)\\
9/12/9; 21 & 0.026 (24/24) & 0.001 (24/24) & 0.004 (24/24)\\
\bottomrule\end{tabular}
\end{table}
\subsection{Automated pairwise judgments}
The local GEditBench-style judge scores instruction following (IF), visual consistency (VC) and visual quality (VQ) in both image orders. These are automated judgments, not human preferences, and order disagreement is reported separately from ties. A staged-text suffix control was also judged for 9/12/9, but it differs from the native suffix only in B0, which has finished executing at every reported branch. Its 216 outputs are byte-identical to the native-suffix outputs, so it is omitted. Repeated judge calls on these identical pairs differ in a single branch-21 consistency outcome.
\begin{table}[!htbp]
\centering\scriptsize\setlength{\tabcolsep}{3pt}
\caption{Final 3/9/18 automated comparison: percentages win/loss/order-disagreement/tie/missing for the final checkpoint versus each control. Each cell has 72 planned pairs, 24 families and both image orders. Missing judgments are retained in the denominator.}\label{tab:v13-final-vlm-3-9-18}
\begin{tabular}{lcccc}\toprule
Branch & Control & IF & VC & VQ\\\midrule
9 & Initial & 11.1/6.9/22.2/59.7/0.0 & 9.7/12.5/34.7/43.1/0.0 & 9.7/22.2/41.7/26.4/0.0\\
9 & Update 600 & 9.7/9.7/11.1/69.4/0.0 & 8.3/8.3/25.0/58.3/0.0 & 11.1/12.5/44.4/31.9/0.0\\
15 & Initial & 0.0/5.6/13.9/80.6/0.0 & 2.8/12.5/19.4/65.3/0.0 & 4.2/9.7/20.8/65.3/0.0\\
15 & Update 600 & 0.0/5.6/9.7/84.7/0.0 & 0.0/5.6/18.1/76.4/0.0 & 2.8/8.3/18.1/70.8/0.0\\
21 & Initial & 0.0/0.0/0.0/100.0/0.0 & 0.0/0.0/0.0/100.0/0.0 & 0.0/0.0/0.0/100.0/0.0\\
21 & Update 600 & 0.0/0.0/0.0/100.0/0.0 & 0.0/0.0/1.4/98.6/0.0 & 0.0/0.0/0.0/100.0/0.0\\
\bottomrule\end{tabular}
\end{table}
\begin{table}[!htbp]
\centering\scriptsize\setlength{\tabcolsep}{3pt}
\caption{As Table~\ref{tab:v13-final-vlm-3-9-18}, schedule 6/12/12.}\label{tab:v13-final-vlm-6-12-12}
\begin{tabular}{lcccc}\toprule
Branch & Control & IF & VC & VQ\\\midrule
9 & Initial & 11.1/4.2/22.2/62.5/0.0 & 12.5/6.9/38.9/41.7/0.0 & 11.1/27.8/48.6/12.5/0.0\\
9 & Update 600 & 6.9/2.8/23.6/66.7/0.0 & 6.9/9.7/27.8/55.6/0.0 & 16.7/11.1/40.3/31.9/0.0\\
15 & Initial & 6.9/4.2/16.7/72.2/0.0 & 5.6/11.1/27.8/55.6/0.0 & 5.6/16.7/19.4/58.3/0.0\\
15 & Update 600 & 4.2/2.8/6.9/86.1/0.0 & 2.8/5.6/20.8/70.8/0.0 & 2.8/15.3/13.9/68.1/0.0\\
21 & Initial & 0.0/0.0/5.6/94.4/0.0 & 1.4/2.8/18.1/77.8/0.0 & 0.0/0.0/11.1/88.9/0.0\\
21 & Update 600 & 0.0/0.0/1.4/98.6/0.0 & 1.4/0.0/9.7/88.9/0.0 & 0.0/1.4/2.8/95.8/0.0\\
\bottomrule\end{tabular}
\end{table}
\begin{table}[!htbp]
\centering\scriptsize\setlength{\tabcolsep}{3pt}
\caption{As Table~\ref{tab:v13-final-vlm-3-9-18}, schedule 9/12/9.}\label{tab:v13-final-vlm-9-12-9}
\begin{tabular}{lcccc}\toprule
Branch & Control & IF & VC & VQ\\\midrule
9 & Initial & 5.6/22.2/26.4/45.8/0.0 & 11.1/26.4/36.1/26.4/0.0 & 16.7/36.1/40.3/6.9/0.0\\
9 & Update 600 & 6.9/9.7/23.6/59.7/0.0 & 11.1/22.2/25.0/41.7/0.0 & 8.3/23.6/51.4/16.7/0.0\\
9 & Native suffix & 25.0/33.3/22.2/19.4/0.0 & 33.3/29.2/31.9/5.6/0.0 & 22.2/37.5/40.3/0.0/0.0\\
15 & Initial & 2.8/12.5/15.3/69.4/0.0 & 4.2/22.2/31.9/41.7/0.0 & 4.2/19.4/41.7/34.7/0.0\\
15 & Update 600 & 4.2/6.9/18.1/70.8/0.0 & 4.2/18.1/30.6/47.2/0.0 & 5.6/16.7/34.7/43.1/0.0\\
15 & Native suffix & 6.9/23.6/29.2/40.3/0.0 & 18.1/22.2/43.1/16.7/0.0 & 18.1/37.5/31.9/12.5/0.0\\
21 & Initial & 2.8/1.4/11.1/84.7/0.0 & 2.8/19.4/38.9/38.9/0.0 & 8.3/13.9/25.0/52.8/0.0\\
21 & Update 600 & 2.8/1.4/4.2/91.7/0.0 & 1.4/9.7/33.3/55.6/0.0 & 6.9/4.2/15.3/73.6/0.0\\
21 & Native suffix & 4.2/1.4/11.1/83.3/0.0 & 2.8/23.6/33.3/40.3/0.0 & 6.9/11.1/30.6/51.4/0.0\\
\bottomrule\end{tabular}
\end{table}

\FloatBarrier
\section{Paired uncertainty for prefix reuse}
\label{app:paired-prefix}
This post-hoc analysis uses all 24 evaluation families and their three seeds.
For each family we average the paired branch-9-minus-regeneration differences
over seeds, then bootstrap families with replacement (10,000 resamples, seed
20260924). Preservation uses seeds with a valid source mask in both arms, and
the semantic change uses all attempts.
\begin{table}[!htbp]
\centering\small
\caption{Paired branch-9-minus-regeneration differences with family-bootstrap 95\% intervals. Negative LPIPS differences indicate better preservation; negative semantic differences indicate a smaller CLIP margin change.}
\begin{tabular}{lccc}\toprule
Schedule & Valid masks & $\Delta$BG LPIPS [95\% CI] & $\Delta$semantic proxy [95\% CI]\\\midrule
3/9/18 & 63/72 & -0.1248 [-0.1588, -0.0936] & -0.0046 [-0.0086, -0.0010]\\
6/12/12 & 61/72 & -0.0924 [-0.1115, -0.0740] & -0.0037 [-0.0070, -0.0005]\\
9/12/9 & 60/72 & -0.0696 [-0.0948, -0.0458] & -0.0007 [-0.0039, 0.0029]\\
\bottomrule\end{tabular}\end{table}
Background measurements cover 22, 21 and 21 families, and semantic
measurements all 24. The table weights families equally, so it can differ
slightly from the valid-attempt means in the main text. The preservation
interval excludes zero for every schedule. The semantic proxy also decreases
for 3/9/18 and 6/12/12, while for 9/12/9 its interval includes zero, which
does not prove equal edit effectiveness. Moving the edit point changes which
history is retained and which future can still be redirected, and reduced
background change can also accompany a weaker subject edit.
\begin{figure}[!htbp]\centering
\includegraphics[width=\linewidth]{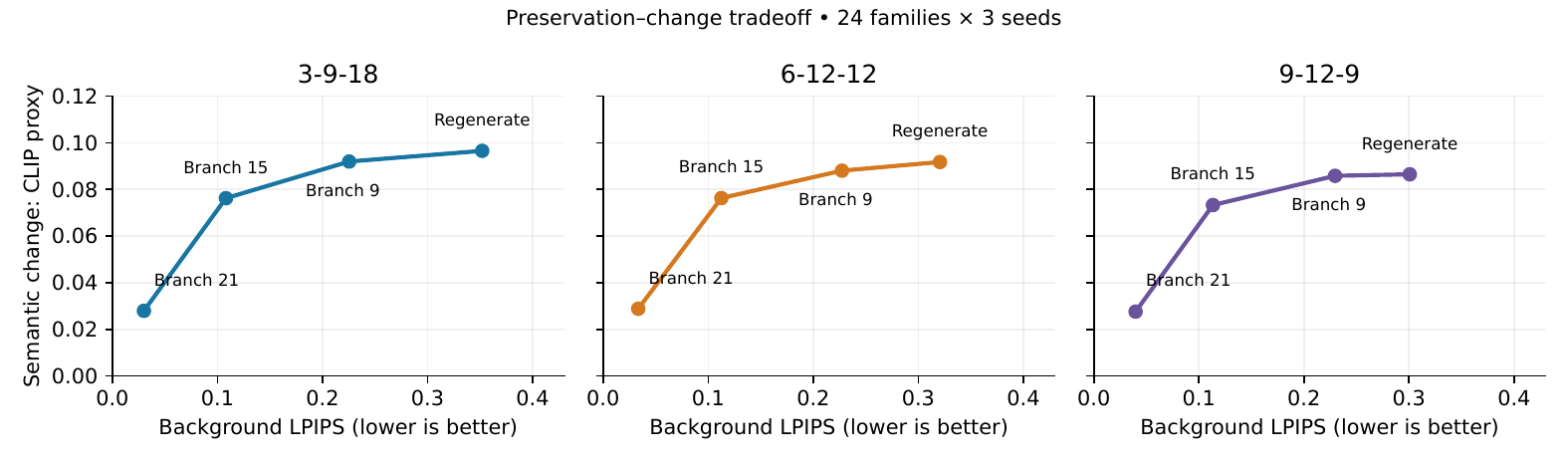}
\caption{All pre-specified branch points under the final edit-only checkpoints. Each curve shows a preservation--semantic-proxy tradeoff, not a validated success frontier. Background means are conditional on valid source masks; semantic means use all attempts.}\label{fig:development-tradeoff}
\end{figure}

\FloatBarrier
\section{Learned and native conditions at an identical prefix}
\label{app:native-suffix}
This comparison fixes the saved source state and changes only the suffix
conditions, using the final 9/12/9 compiler, 24 evaluation families, three
seeds and all three branch points. The native control repeats the native
target-caption features across the suffix, and source-image and prefix hashes
match within every pair. Because the common prefix comes from the frozen
initial compiler, this is an intervention on conditions at a matched state,
not a comparison of complete native and compiled trajectories.

\begin{table}[!htbp]\centering\small
\caption{Matched-prefix preservation at the final checkpoint. Background LPIPS uses
historical source masks: 60/72 cases from 21 families per branch. Lower is
better. The difference and interval give an equally weighted family estimate,
whereas the two means average valid attempts.}\label{tab:native-prefix-preservation}
\begin{tabular}{rrrrl}\toprule
Branch & Native & Trained & Lower in & $\Delta$ [descriptive 95\% interval]\\\midrule
9 &0.2528&0.2296&41/60&$-0.0216$ [$-0.0461$,$-0.0036$]\\
15&0.1134&0.1134&28/60&$+0.0000$ [$-0.0069$,$+0.0069$]\\
21&0.0305&0.0398&8/60&$+0.0090$ [$+0.0062$,$+0.0119$]\\
\bottomrule\end{tabular}\end{table}
The branch-9 valid-attempt mean decreases by 9.16\%, branch 15 is
essentially unchanged and branch 21 worsens. Intervals bootstrap the 21
family means (10000 repetitions, seed 20260924). The masks overlap the
training observer, and enlarged or moved targets can enter their nominal
background, so this does not establish equal editing effectiveness.

\begin{figure}[!htbp]\centering\setlength{\tabcolsep}{2pt}
\begin{tabular}{ccc}
\small Source&\small Native suffix&\small Trained suffix\\
\multicolumn{3}{l}{\small Bicycle to motorcycle; seed 303, branch 9}\\
\includegraphics[width=.31\linewidth]{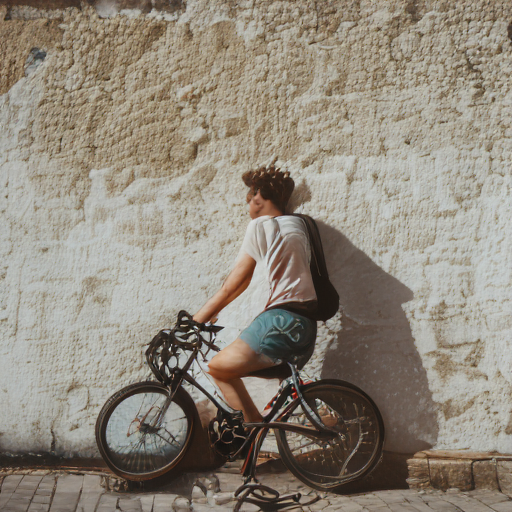}&
\includegraphics[width=.31\linewidth]{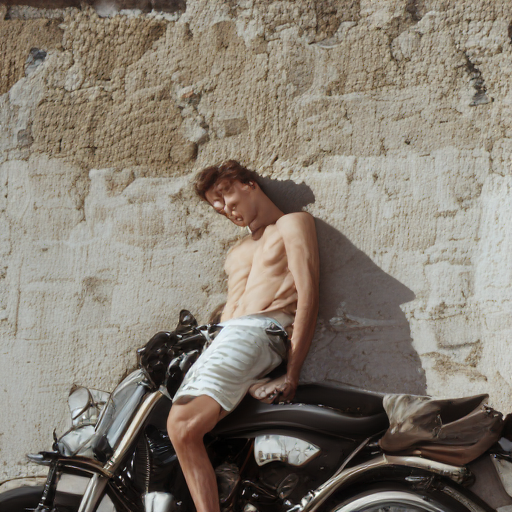}&
\includegraphics[width=.31\linewidth]{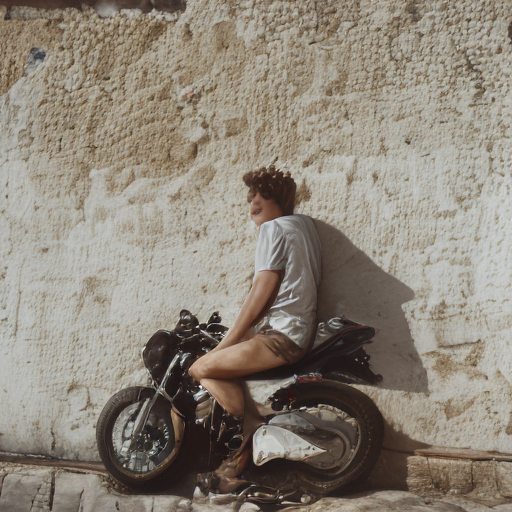}\\
&\scriptsize BG LPIPS 0.208&\scriptsize BG LPIPS 0.094\\[4pt]
\multicolumn{3}{l}{\small Older man to young woman; seed 101, branch 9}\\
\includegraphics[width=.31\linewidth]{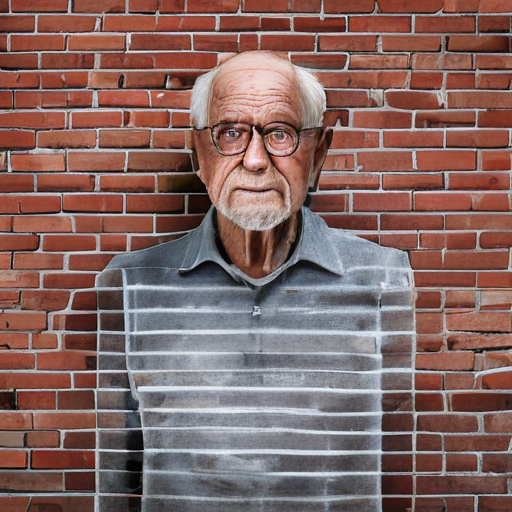}&
\includegraphics[width=.31\linewidth]{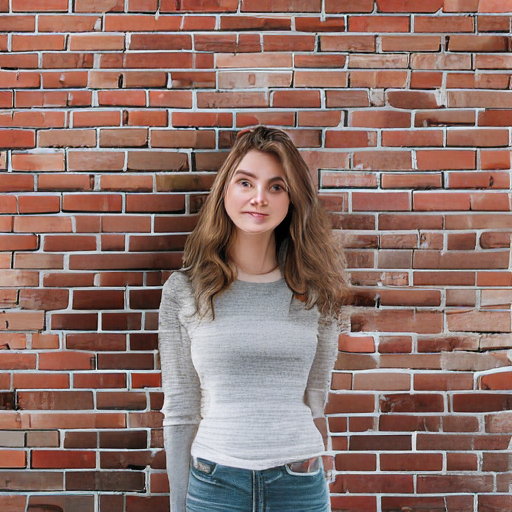}&
\includegraphics[width=.31\linewidth]{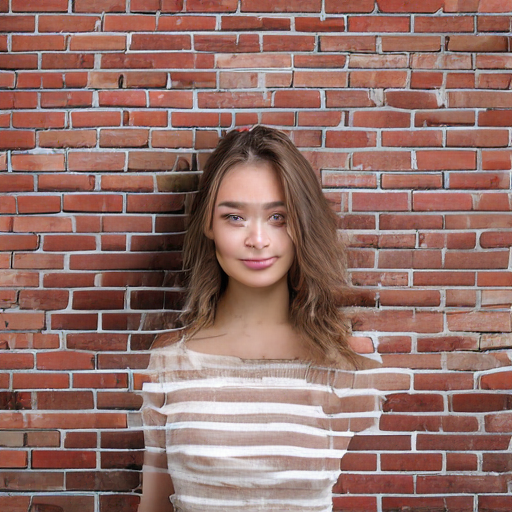}\\
&\scriptsize BG LPIPS 0.107&\scriptsize BG LPIPS 0.093\\
\end{tabular}
\caption{\textbf{Selected differences beyond prefix reuse.} Both suffixes
start from the same state. The trained output better retains the person's
scale and the wall in the first row, and the striped clothing and portrait
scale in the second. The bicycle caption also requests napping, which
neither edit realizes. Post-hoc examples from the final 9/12/9 compiler,
uncropped and unretouched.}\label{fig:native-suffix-examples}
\end{figure}
\paragraph{Inspection coverage and counterexamples.}
All 72 source cases and 216 branch comparisons were inspected.
Hydrant-to-lamp and bench-to-planter edits often behave similarly under both
methods. In an elephant-to-giraffe case both outputs alter the trees despite
lower LPIPS for the trained output, and the target caption still contains
``trunk''. Other examples preserve the source by failing a requested action
or omitting a requested phone.
\clearpage

\FloatBarrier
\section{Observer-independent region comparisons}
\label{app:observer-independent-regions}
The training preservation metric shares its observer with the compiler.
This appendix therefore rescores the same outputs on regions defined only by a
frozen external detector.

\subsection{Independent source-box regions}\label{app:independent-box-regions}
\paragraph{Protocol.} We score the complement of the union of source-only Grounding DINO boxes at the frozen 0.40 threshold, with 0, 5 or 10\% padding per box, and use the same region for every method editing a given source. Spatial AlexNet LPIPS is averaged over the defined background pixels. Attempts without a usable box stay in the attempted count. The full cohort has 72 attempts per schedule (24 families, three seeds) and the category-eligible cohort 36 (12 families). Differences are paired by family, seed, schedule and source, and intervals resample families 10,000 times (seed 20260924).

\begin{table}[!htbp]
\centering
\caption{Source-box background LPIPS on the full evaluation cohort. Entries are paired mean differences (ours branch 9 minus comparator; lower is better). $n$ is defined/attempted, with missing cases retained.}
\label{tab:independent-source-box-full}
\small\setlength{\tabcolsep}{4pt}
\begin{tabular}{lcccccc}
\toprule
Schedule & pad & $n$ & Regen. & FlowEdit & SDEdit & Native \\
\midrule
3-9-18 & 0\% & 68/72 & $-0.122$ & $+0.100$ & $-0.040$ & $-0.016$ \\
3-9-18 & 5\% & 65/72 & $-0.129$ & $+0.097$ & $-0.052$ & $-0.017$ \\
3-9-18 & 10\% & 62/72 & $-0.127$ & $+0.097$ & $-0.056$ & $-0.016$ \\
\addlinespace
6-12-12 & 0\% & 66/72 & $-0.092$ & $+0.105$ & $-0.043$ & $-0.024$ \\
6-12-12 & 5\% & 62/72 & $-0.094$ & $+0.104$ & $-0.055$ & $-0.022$ \\
6-12-12 & 10\% & 60/72 & $-0.090$ & $+0.100$ & $-0.064$ & $-0.023$ \\
\addlinespace
9-12-9 & 0\% & 64/72 & $-0.070$ & $+0.098$ & $-0.055$ & $-0.030$ \\
9-12-9 & 5\% & 62/72 & $-0.068$ & $+0.096$ & $-0.067$ & $-0.030$ \\
9-12-9 & 10\% & 60/72 & $-0.066$ & $+0.092$ & $-0.075$ & $-0.027$ \\
\bottomrule
\end{tabular}
\end{table}

\begin{table}[!htbp]
\centering
\caption{As Table~\ref{tab:independent-source-box-full}, on the eligible cohort.}
\label{tab:independent-source-box-eligible}
\small\setlength{\tabcolsep}{4pt}
\begin{tabular}{lcccccc}
\toprule
Schedule & pad & $n$ & Regen. & FlowEdit & SDEdit & Native \\
\midrule
3-9-18 & 0\% & 33/36 & $-0.114$ & $+0.114$ & $-0.023$ & $-0.011$ \\
3-9-18 & 5\% & 30/36 & $-0.124$ & $+0.111$ & $-0.043$ & $-0.014$ \\
3-9-18 & 10\% & 30/36 & $-0.127$ & $+0.107$ & $-0.048$ & $-0.012$ \\
\addlinespace
6-12-12 & 0\% & 33/36 & $-0.083$ & $+0.109$ & $-0.035$ & $-0.029$ \\
6-12-12 & 5\% & 30/36 & $-0.083$ & $+0.106$ & $-0.058$ & $-0.031$ \\
6-12-12 & 10\% & 30/36 & $-0.083$ & $+0.097$ & $-0.070$ & $-0.032$ \\
\addlinespace
9-12-9 & 0\% & 32/36 & $-0.062$ & $+0.094$ & $-0.050$ & $-0.031$ \\
9-12-9 & 5\% & 30/36 & $-0.061$ & $+0.089$ & $-0.067$ & $-0.031$ \\
9-12-9 & 10\% & 29/36 & $-0.061$ & $+0.082$ & $-0.082$ & $-0.031$ \\
\bottomrule
\end{tabular}
\end{table}

\begin{table}[!htbp]
\centering
\caption{Selected 95\% family-bootstrap intervals for the full evaluation cohort. All comparator intervals and RGB MSE are available in the analysis artifact.}
\label{tab:independent-source-box-intervals}
\small\setlength{\tabcolsep}{4pt}
\begin{tabular}{llccc}
\toprule
Comparator & Schedule & pad & $n$ & $\Delta$ [95\% interval] \\
\midrule
Regeneration & 3-9-18 & 0\% & 68/72 & $-0.122\,[-0.155,-0.092]$ \\
Regeneration & 3-9-18 & 5\% & 65/72 & $-0.129\,[-0.168,-0.095]$ \\
Regeneration & 3-9-18 & 10\% & 62/72 & $-0.127\,[-0.170,-0.090]$ \\
\addlinespace
Regeneration & 6-12-12 & 0\% & 66/72 & $-0.092\,[-0.113,-0.073]$ \\
Regeneration & 6-12-12 & 5\% & 62/72 & $-0.094\,[-0.116,-0.072]$ \\
Regeneration & 6-12-12 & 10\% & 60/72 & $-0.090\,[-0.110,-0.070]$ \\
\addlinespace
Regeneration & 9-12-9 & 0\% & 64/72 & $-0.070\,[-0.093,-0.048]$ \\
Regeneration & 9-12-9 & 5\% & 62/72 & $-0.068\,[-0.091,-0.047]$ \\
Regeneration & 9-12-9 & 10\% & 60/72 & $-0.066\,[-0.091,-0.043]$ \\
\midrule
Native shared & 9-12-9 & 0\% & 64/72 & $-0.030\,[-0.054,-0.012]$ \\
Native shared & 9-12-9 & 5\% & 62/72 & $-0.030\,[-0.057,-0.009]$ \\
Native shared & 9-12-9 & 10\% & 60/72 & $-0.027\,[-0.054,-0.006]$ \\
\bottomrule
\end{tabular}
\end{table}

\paragraph{Summary.} For every schedule and padding, ours at branch 9 has lower source-box background LPIPS than full regeneration and higher LPIPS than FlowEdit--SANA. The SDEdit--SANA and native comparisons depend more on cohort and padding, and the eligible 9/12/9 native intervals at 5 and 10\% include zero. The metric does not account for missed source boxes or for edit success.

\FloatBarrier
\subsection{Native shared-path comparison across branch points}\label{app:native-source-box-regions}
The protocol follows Appendix~\ref{app:independent-box-regions}. Each entry is the paired mean difference in source-box background LPIPS, ours minus native shared path, so negative values favor ours. Native outputs for 3/9/18 and 6/12/12 are new, and 9/12/9 reuses the existing outputs.

\begin{table}[!htbp]
\centering
\caption{Native shared-path source-box LPIPS comparison, full evaluation cohort. Point differences are ours minus native; $n$ is defined/attempted.}
\label{tab:native-source-box-full}
\small\setlength{\tabcolsep}{4pt}
\begin{tabular}{llccc}
\toprule
Schedule & Branch & pad 0\% & pad 5\% & pad 10\% \\
\midrule
3-9-18 & 9 & $-0.016$ (68/72) & $-0.017$ (65/72) & $-0.016$ (62/72) \\
3-9-18 & 15 & $+0.003$ (68/72) & $+0.003$ (65/72) & $+0.003$ (62/72) \\
3-9-18 & 21 & $+0.001$ (68/72) & $+0.001$ (65/72) & $+0.001$ (62/72) \\
\addlinespace
6-12-12 & 9 & $-0.024$ (66/72) & $-0.022$ (62/72) & $-0.023$ (60/72) \\
6-12-12 & 15 & $-0.002$ (66/72) & $-0.001$ (62/72) & $-0.000$ (60/72) \\
6-12-12 & 21 & $+0.003$ (66/72) & $+0.003$ (62/72) & $+0.003$ (60/72) \\
\addlinespace
9-12-9 & 9 & $-0.030$ (64/72) & $-0.030$ (62/72) & $-0.027$ (60/72) \\
9-12-9 & 15 & $-0.005$ (64/72) & $-0.004$ (62/72) & $-0.002$ (60/72) \\
9-12-9 & 21 & $+0.008$ (64/72) & $+0.009$ (62/72) & $+0.009$ (60/72) \\
\bottomrule
\end{tabular}
\end{table}

\begin{table}[!htbp]
\centering
\caption{As Table~\ref{tab:native-source-box-full}, on the eligible cohort.}
\label{tab:native-source-box-eligible}
\small\setlength{\tabcolsep}{4pt}
\begin{tabular}{llccc}
\toprule
Schedule & Branch & pad 0\% & pad 5\% & pad 10\% \\
\midrule
3-9-18 & 9 & $-0.011$ (33/36) & $-0.014$ (30/36) & $-0.012$ (30/36) \\
3-9-18 & 15 & $+0.003$ (33/36) & $+0.003$ (30/36) & $+0.003$ (30/36) \\
3-9-18 & 21 & $+0.002$ (33/36) & $+0.002$ (30/36) & $+0.002$ (30/36) \\
\addlinespace
6-12-12 & 9 & $-0.029$ (33/36) & $-0.031$ (30/36) & $-0.032$ (30/36) \\
6-12-12 & 15 & $-0.002$ (33/36) & $-0.002$ (30/36) & $-0.002$ (30/36) \\
6-12-12 & 21 & $+0.003$ (33/36) & $+0.003$ (30/36) & $+0.003$ (30/36) \\
\addlinespace
9-12-9 & 9 & $-0.031$ (32/36) & $-0.031$ (30/36) & $-0.031$ (29/36) \\
9-12-9 & 15 & $-0.008$ (32/36) & $-0.007$ (30/36) & $-0.006$ (29/36) \\
9-12-9 & 21 & $+0.008$ (32/36) & $+0.009$ (30/36) & $+0.010$ (29/36) \\
\bottomrule
\end{tabular}
\end{table}

At branch 9 with 5\% padding on the full cohort, the paired intervals are
$[-0.0407,-0.0016]$, $[-0.0397,-0.0081]$ and $[-0.0567,-0.0087]$ for 3/9/18,
6/12/12 and 9/12/9. Every branch point is shown so that branch 9 is not
selected after the fact; the remaining intervals are in the analysis
artifact.

\FloatBarrier
\section{Cost of reusing a common prefix}
\label{app:prefix-cost}
We compare caching one branch-9 state with recomputing the same prefix before each target continuation, using four scenes, four target prompts per scene, one noise seed and three timing repetitions on an NVIDIA A800-SXM4-80GB. Both modes include condition encoding, prefix and suffix execution and decoding, and exclude model loading and file writes. Mode order alternates across repetitions.
\begin{table}[!htbp]\centering\small
\caption{Mean time over four scenes and three repetitions to produce four target continuations. Every cached endpoint matches its recomputed counterpart exactly.}
\begin{tabular}{lrrrr}\toprule
Schedule & Recomputed (s) & Cached (s) & Speedup & Time saved\\\midrule
3/9/18 & 9.28 & 7.19 & 1.29$\times$ & 22.6\%\\
6/12/12 & 9.20 & 7.18 & 1.28$\times$ & 22.0\%\\
9/12/9 & 9.39 & 7.19 & 1.31$\times$ & 23.4\%\\
\bottomrule\end{tabular}\end{table}
Each saved latent occupies 32\,KiB. For $m$ continuations with $N=30$ and $b=9$, the two modes cost $mN$ and $b+m(N-b)$ denoising steps, so four continuations need 93 steps instead of 120 (22.5\% fewer). All 144 paired endpoints (three compilers, four scenes, three repetitions, four targets) pass the equality check. Native prompt switching can cache a prefix in the same way, so the saving is a property of the interface rather than of training.

\FloatBarrier

\part*{Part IV\quad Mechanism and interventions}
\section{Schedule, source-path and block interventions}
\label{app:mechanism-current}
\paragraph{Mechanism tests at the final edit-only checkpoints.}
\label{sec:v13-mechanism}
We ask whether training changes the response to future conditions, and
whether that response depends on the execution schedule or the source path.
We use the final checkpoints and the first eight families of the
evaluation set with its three seeds.

\paragraph{Training schedule versus inference schedule.}
A $3\times3$ crossing applies each trained compiler, and its
initialization, under all three inference schedules. Within each schedule,
all methods resume identical frozen-source prefixes at steps 0, 9, 15 and 21,
where step 0 is full regeneration. Each compiler keeps its original encoder
clock, so off-diagonal cells test a mismatch between execution and encoder
timing. Without repeated training seeds, differences between checkpoints do
not establish a causal schedule effect.
\begin{table}[!htbp]
\centering\scriptsize\setlength{\tabcolsep}{3pt}
\caption{Complete crossing, shown at branch 15. Cells report CLIP paired margin change / background LPIPS, with defined denominators in parentheses (24 attempted). These proxies do not establish edit success.}\label{tab:v13-crossing}
\begin{tabular}{lccc}\toprule
Training schedule & 3/9/18 & 6/12/12 & 9/12/9\\\midrule
3/9/18 & 0.074 (24) / 0.129 (24) & 0.070 (24) / 0.133 (23) & 0.075 (24) / 0.139 (22)\\
6/12/12 & 0.076 (24) / 0.138 (24) & 0.073 (24) / 0.138 (23) & 0.078 (24) / 0.144 (22)\\
9/12/9 & 0.077 (24) / 0.160 (24) & 0.070 (24) / 0.144 (23) & 0.074 (24) / 0.143 (22)\\
Initialization & 0.072 (24) / 0.126 (24) & 0.073 (24) / 0.134 (23) & 0.076 (24) / 0.143 (22)\\
\bottomrule\end{tabular}
\end{table}
\begin{table}[!htbp]
\centering\scriptsize\setlength{\tabcolsep}{3pt}
\caption{Diagonal source-path comparison. CLIP paired margin change / background LPIPS (defined denominators); different source modes use different source images.}\label{tab:v13-own-source}
\begin{tabular}{lcc}\toprule
Schedule; branch & Matched source & Own source\\\midrule
3/9/18; 0 & 0.101 (24) / 0.399 (24) & 0.101 (24) / 0.390 (22)\\
3/9/18; 9 & 0.091 (24) / 0.266 (24) & 0.092 (24) / 0.254 (22)\\
3/9/18; 15 & 0.074 (24) / 0.129 (24) & 0.074 (24) / 0.126 (22)\\
3/9/18; 21 & 0.019 (24) / 0.036 (24) & 0.022 (24) / 0.038 (22)\\
6/12/12; 0 & 0.095 (24) / 0.375 (23) & 0.095 (24) / 0.369 (22)\\
6/12/12; 9 & 0.089 (24) / 0.272 (23) & 0.088 (24) / 0.260 (22)\\
6/12/12; 15 & 0.073 (24) / 0.138 (23) & 0.077 (24) / 0.131 (22)\\
6/12/12; 21 & 0.027 (24) / 0.042 (23) & 0.024 (24) / 0.035 (22)\\
9/12/9; 0 & 0.093 (24) / 0.336 (22) & 0.093 (24) / 0.331 (21)\\
9/12/9; 9 & 0.091 (24) / 0.264 (22) & 0.088 (24) / 0.248 (21)\\
9/12/9; 15 & 0.074 (24) / 0.143 (22) & 0.071 (24) / 0.124 (21)\\
9/12/9; 21 & 0.025 (24) / 0.049 (22) & 0.027 (24) / 0.034 (21)\\
\bottomrule\end{tabular}
\end{table}
\begin{table}[!htbp]
\centering\scriptsize\setlength{\tabcolsep}{3pt}
\caption{Block interventions at branch 0, where all blocks are active. CLIP paired margin change / background LPIPS (defined denominators). Initial and trained methods share the frozen source; these are descriptive measurements.}\label{tab:v13-block-interventions}
\begin{tabular}{lccccc}\toprule
Compiler & B0 & B1 & B2 & all & nochange\\\midrule
Trained 3/9/18 & \shortstack{0.001 (24)\\0.179 (24)} & \shortstack{0.015 (24)\\0.306 (24)} & \shortstack{0.090 (24)\\0.237 (24)} & \shortstack{0.101 (24)\\0.399 (24)} & \shortstack{-0.000 (24)\\0.147 (24)}\\
Trained 6/12/12 & \shortstack{0.001 (24)\\0.204 (23)} & \shortstack{0.040 (24)\\0.301 (23)} & \shortstack{0.048 (24)\\0.114 (23)} & \shortstack{0.095 (24)\\0.375 (23)} & \shortstack{0.000 (24)\\0.070 (23)}\\
Trained 9/12/9 & \shortstack{0.004 (24)\\0.221 (22)} & \shortstack{0.059 (24)\\0.245 (22)} & \shortstack{0.027 (24)\\0.094 (22)} & \shortstack{0.093 (24)\\0.336 (22)} & \shortstack{0.000 (24)\\0.078 (22)}\\
Init. 3/9/18 & \shortstack{0.000 (24)\\0.088 (24)} & \shortstack{0.012 (24)\\0.319 (24)} & \shortstack{0.087 (24)\\0.191 (24)} & \shortstack{0.099 (24)\\0.411 (24)} & \shortstack{0.000 (24)\\0.000 (24)}\\
Init. 6/12/12 & \shortstack{-0.000 (24)\\0.190 (23)} & \shortstack{0.039 (24)\\0.312 (23)} & \shortstack{0.047 (24)\\0.078 (23)} & \shortstack{0.093 (24)\\0.382 (23)} & \shortstack{0.000 (24)\\0.000 (23)}\\
Init. 9/12/9 & \shortstack{0.006 (24)\\0.208 (22)} & \shortstack{0.060 (24)\\0.256 (22)} & \shortstack{0.026 (24)\\0.039 (22)} & \shortstack{0.094 (24)\\0.343 (22)} & \shortstack{0.000 (24)\\0.000 (22)}\\
\bottomrule\end{tabular}
\end{table}
Across all four branches, 1200 inactive interventions passed exact latent equality to their same-compiler source-condition continuation. This is a routing gate, not a learned-specialization result.

\paragraph{Matched source versus own source.}
Three further diagonal conditions generate the source with the trained
compiler used for editing. Matched sources isolate the suffix intervention
at a fixed state, whereas own sources test the complete path but also change
the source image and prefix. Background retention is measured against each
condition's actual source.

\paragraph{Block-specific interventions.}
Source and target conditions share a common sequence length with
zero-valued masked padding, and a padding-only control (``nochange'') records
the resulting numerical drift. For the initialization and the three trained
compilers, at a common saved prefix, we replace the source conditions by
target conditions in $B_0$, $B_1$, $B_2$, all blocks or none, at branches
0, 9, 15 and 21. Replacing a block whose execution has ended must have no
effect, which is a routing check rather than evidence of specialization.
Specialization would additionally require differentiated effects among active
blocks relative to initialization.

\FloatBarrier
\section{Stage readouts and single-block interventions}
\label{app:stage-factorization}
\paragraph{The model's own generation path.}
Here we generate from noise with the final 6/12/12 compiler itself, rather
than editing a frozen reference path. Eight fixed evaluation families at
close framing and seed 101 are compared with the role-fitted initialization.
Each model receives the source caption, subject-only and background-only
changes, and six single-block substitutions, for 144 trajectories and 576
archived images. The displayed case is the first family, not a selected
winner.

\begin{figure}[!htbp]
\centering
\includegraphics[width=\linewidth]{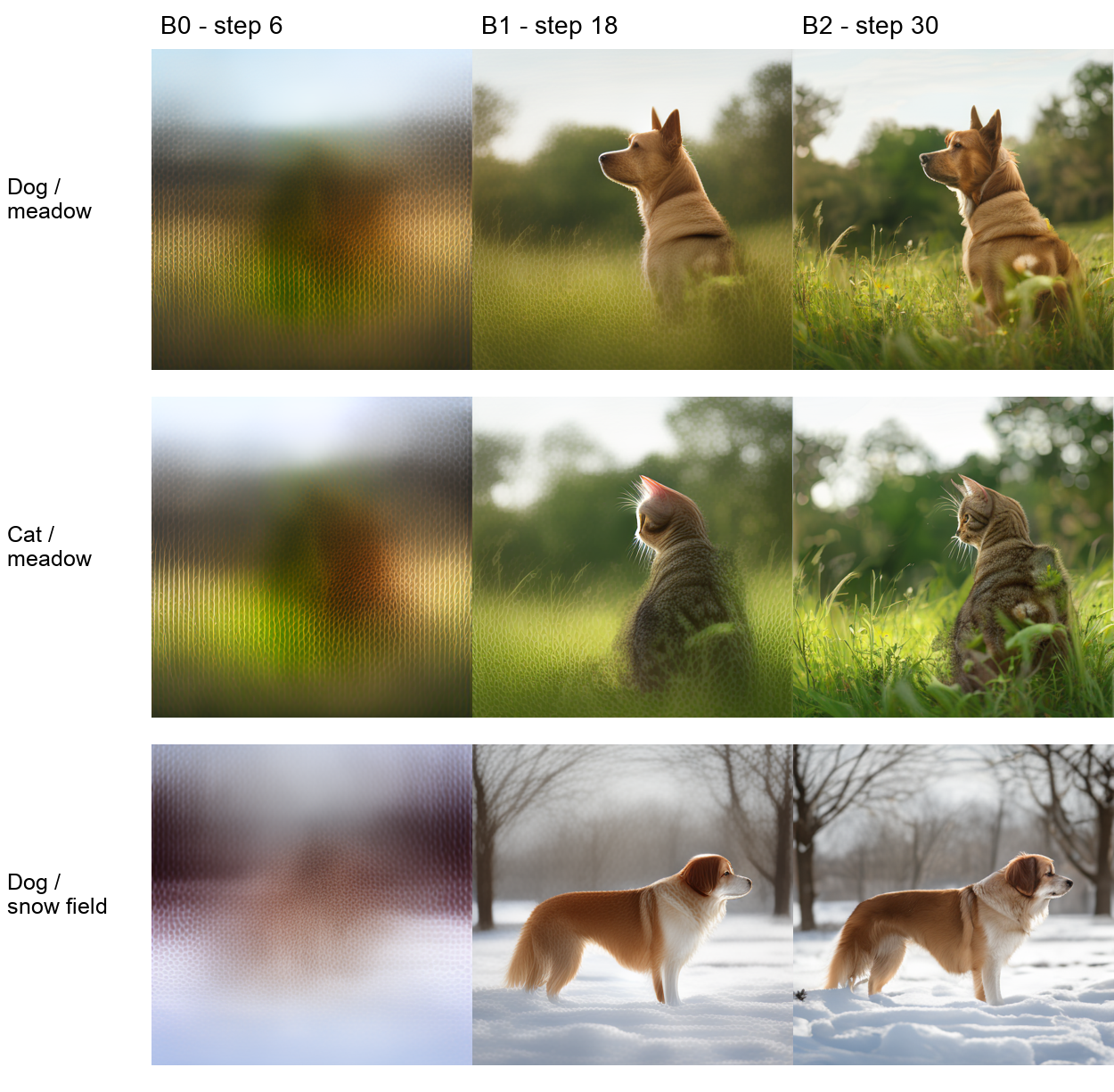}
\caption{\textbf{Generation along the learned path.} Final compiler, identical
initial noise, with source, subject-changed and background-changed captions.
Columns show $\widehat{x}_0=x_t-\sigma_t v_\theta(x_t,c_k)$ from the last
executed denoiser evaluation in each block (steps 6, 18, 30), not decodes of
noisy states. B0 exposes coarse scene appearance, B1 makes the subject
recognizable and B2 completes the image. Changing the scene also changes
pose and appearance, so stage progression alone is not factor independence.}
\label{fig:model-stage-readouts}
\end{figure}
\FloatBarrier

\paragraph{Which block controls the change?}
Each substitution replaces exactly one source block and its mask with those
compiled from the target caption, holding all other conditions, masks and
noise fixed. In the displayed case, replacing B0 with the cat-conditioned
block leaves the dog largely unchanged, whereas replacing B2 changes its
category. A scene-conditioned B0 changes the dog and composition without
producing snow, and replacing B1 produces snow but also changes pose. Later
blocks can therefore still determine scene content, and B2 can change
semantics rather than only texture.

\paragraph{Interpretation and limits.}
The eight-family gallery shows a useful temporal organization but not a
universal scene-only B0, since the zebra-on-beach caption already produces an
animal-shaped silhouette in its B0 readout, and single-block substitutions
can create hybrid anatomy such as elephant--giraffe mixtures.

Several features of the implementation let temporal organization and strict
independence differ. Role fitting masks a full-caption encoding whose
unmasked token features may retain subject context. The masks are cumulative
and the compiler shares a context trunk, so suffix training can change
early-block outputs. All blocks also act through the same spatial state and
denoiser. For fixed masks, with
$x_{t+1}=x_t+\Delta\sigma_t v(x_t,c_k)$, a condition perturbation propagates as
\begin{equation}
 \delta x_{t+1}=
 (I+\Delta\sigma_t D_xv)\,\delta x_t+
 \Delta\sigma_t D_cv\,\delta c_k
 \quad\text{to first order}.
\end{equation}
Temporal routing alone does not force the background projection of this
response to vanish, consistent with the criterion in
Appendix~\ref{app:factor-independence}. Source-box metrics add their own
coupling, as they count legitimate changes outside the old silhouette as
background damage, and foreground rectangles contain background pixels.

\FloatBarrier
\section{An explicit-background input-isolation variant}
\label{app:b0-isolation}
\paragraph{Intervention and its scope.}
The full-caption compiler masks contextual text features, so B0 can retain
subject information even when its direct token channel is masked. We
therefore test a training-free input intervention on the final 6/12/12
checkpoint. An explicit background sentence, taken from structured scenario
metadata and describing the scene and lighting, is encoded separately
through the text encoder, compiler and frozen mask predictor, and its B0
replaces the full-caption B0. B1 and B2 are unchanged and carry the subject,
framing and viewpoint instructions, so the early composition prior changes
as well. For a suffix resumed at step 6 or later the target B0 is no longer
executed, and a background change requested after that point must be
expressed by the suffix.

\paragraph{A precise invariance, rather than full-image independence.}
Write $c_0=F_0(E(b))$ and $m_0=M_0(E(b))$ for the separately compiled
background $b$, and let $s$ denote the subject description. For fixed model,
noise $z$ and deterministic sampler, the state at the B0 boundary satisfies
\begin{equation}
 x_6(b,s,z)=\Phi_0(z;F_0(E(b)),M_0(E(b)))=x_6(b,s',z).
 \label{eq:b0-input-invariance}
\end{equation}
This invariance to subject input follows from the routing, not from
training. It does not make foreground and background pixels independent
after B1/B2, which still act on a shared spatial state.

\paragraph{Initial mechanism check.}
All eight fixed evaluation families at seed 101 and close framing were
tested, giving 48 trajectories and 192 stage and endpoint images. Under
subject replacement, the isolated B0 condition, clean prediction and
post-block latent are exactly unchanged in 8/8 cases, against 0/8 for the
original compiler. Changing the background changes the isolated B0 state in
8/8 cases, which rules out a constant state. Under the full-caption compiler, by contrast, the
step-6 state already carries the replaced subject, with leak magnitudes from
0.081 to 0.510 in state and from 1.02 to 9.50 in the clean prediction. In the reviewed galleries the
target categories stay recognizable, but subject scale and composition
change.

\paragraph{Complete matched evaluation.}
Because the check above fixes one seed and framing, we also ran the full
frozen protocol on the same checkpoint, with 48 sources (8 scene families,
2 framings, 3 seeds), both edit directions, five starts $\{0,6,9,15,21\}$ and
six methods. This gives 2{,}880 outputs plus 96 source and reconstruction
images. Every source is regenerated by the B0-isolated compiler and all
controls are rerun on it, with detector thresholds, source-box masks and the
CLIP evaluator reused without retuning.

\begin{figure}[!htbp]
\centering
\includegraphics[width=.95\linewidth]{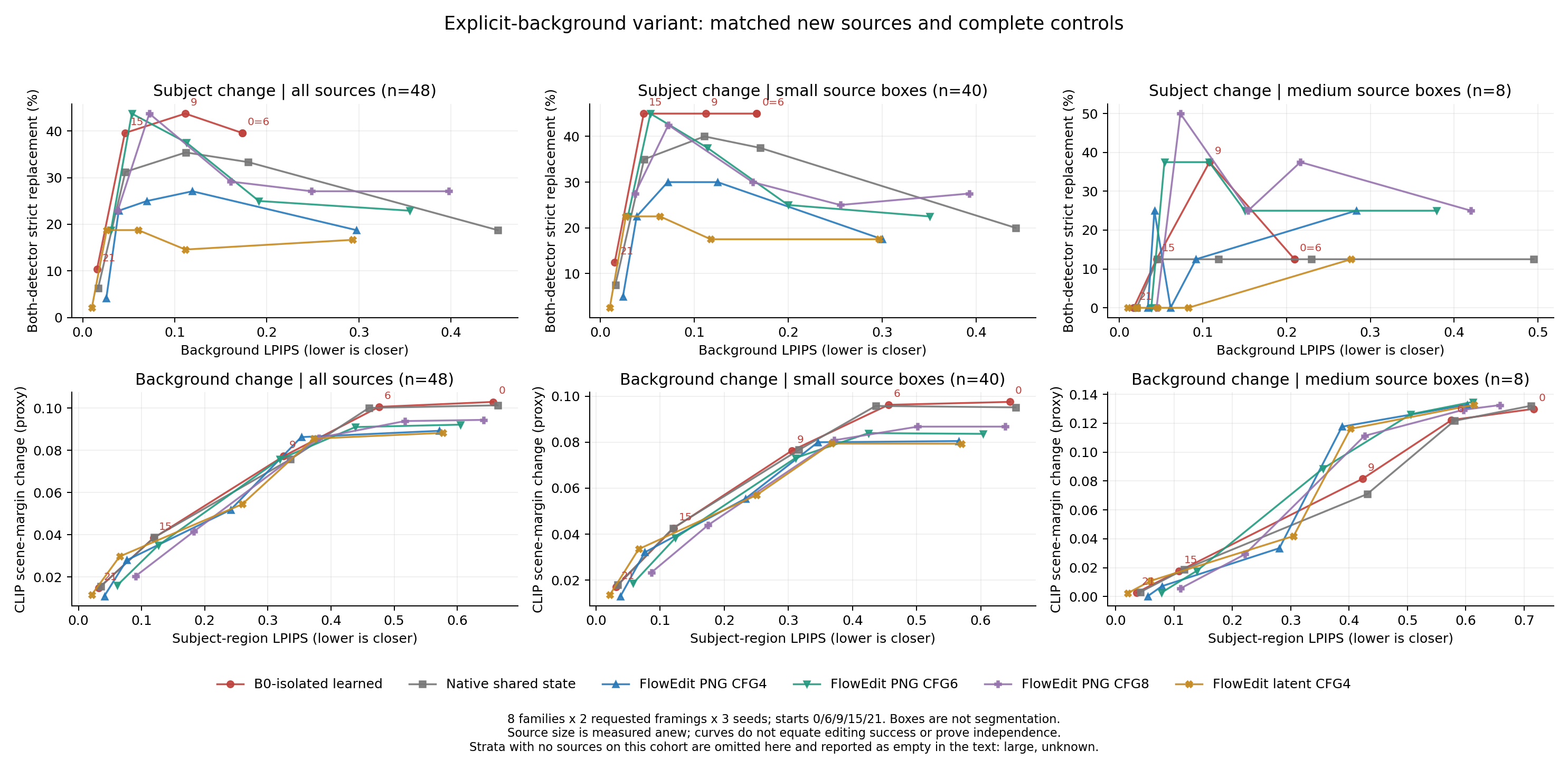}
\caption{\textbf{Edit strength against source closeness, all five starts.}
Each curve traces one method across starts 0, 6, 9, 15 and 21, for the
whole cohort and the two non-empty source-size strata. Later starts stay
closer to the source but edit less, so low LPIPS at a late start indicates a
weak edit. Subject panels use both-detector strict replacement, and
background panels the uncalibrated CLIP scene margin. The medium stratum
holds only eight sources.}
\label{fig:b0-isolated-tradeoff}
\end{figure}
\FloatBarrier

\paragraph{Matched subject-replacement comparison.}
Native suffixes receive exactly the same saved state, all five starts are retained, and source category failures stay in the denominator.
\begin{table}[!htbp]\centering\small
\setlength{\tabcolsep}{4pt}
\caption{B0-isolated source cohort: subject replacement. Strict category replacement requires source category present in the source, target present after editing and source category absent after editing. Counts use all 48 attempts; Both requires both calibrated detectors. BG LPIPS uses 5\% source-box padding; parentheses give defined backgrounds. Lower LPIPS is useful only with successful editing.}\label{tab:b0-isolated-subject}
\begin{tabular}{rlrrrr}\toprule
Start & Method & DINO & OWLv2 & Both & BG LPIPS ($n$)\\\midrule
0 & B0-isolated learned & 19 & 48 & 19 & 0.173 (48)\\
0 & Native shared state & 9 & 48 & 9 & 0.451 (48)\\
0 & FlowEdit PNG, CFG4 & 9 & 48 & 9 & 0.297 (48)\\
0 & FlowEdit PNG, CFG6 & 11 & 48 & 11 & 0.355 (48)\\
0 & FlowEdit PNG, CFG8 & 13 & 48 & 13 & 0.397 (48)\\
0 & FlowEdit latent, CFG4 & 8 & 48 & 8 & 0.293 (48)\\
\midrule
6 & B0-isolated learned & 19 & 48 & 19 & 0.173 (48)\\
6 & Native shared state & 16 & 48 & 16 & 0.180 (48)\\
6 & FlowEdit PNG, CFG4 & 13 & 48 & 13 & 0.119 (48)\\
6 & FlowEdit PNG, CFG6 & 12 & 48 & 12 & 0.191 (48)\\
6 & FlowEdit PNG, CFG8 & 13 & 48 & 13 & 0.249 (48)\\
6 & FlowEdit latent, CFG4 & 7 & 48 & 7 & 0.112 (48)\\
\midrule
9 & B0-isolated learned & 21 & 48 & 21 & 0.111 (48)\\
9 & Native shared state & 17 & 48 & 17 & 0.112 (48)\\
9 & FlowEdit PNG, CFG4 & 12 & 47 & 12 & 0.070 (48)\\
9 & FlowEdit PNG, CFG6 & 18 & 48 & 18 & 0.113 (48)\\
9 & FlowEdit PNG, CFG8 & 14 & 48 & 14 & 0.161 (48)\\
9 & FlowEdit latent, CFG4 & 9 & 47 & 9 & 0.060 (48)\\
\midrule
15 & B0-isolated learned & 19 & 48 & 19 & 0.046 (48)\\
15 & Native shared state & 15 & 48 & 15 & 0.046 (48)\\
15 & FlowEdit PNG, CFG4 & 11 & 48 & 11 & 0.039 (48)\\
15 & FlowEdit PNG, CFG6 & 21 & 48 & 21 & 0.053 (48)\\
15 & FlowEdit PNG, CFG8 & 21 & 47 & 21 & 0.072 (48)\\
15 & FlowEdit latent, CFG4 & 9 & 48 & 9 & 0.026 (48)\\
\midrule
21 & B0-isolated learned & 5 & 38 & 5 & 0.016 (48)\\
21 & Native shared state & 3 & 41 & 3 & 0.017 (48)\\
21 & FlowEdit PNG, CFG4 & 2 & 36 & 2 & 0.026 (48)\\
21 & FlowEdit PNG, CFG6 & 9 & 45 & 9 & 0.031 (48)\\
21 & FlowEdit PNG, CFG8 & 11 & 48 & 11 & 0.038 (48)\\
21 & FlowEdit latent, CFG4 & 1 & 40 & 1 & 0.010 (48)\\
\bottomrule\end{tabular}\end{table}
Start 0 is full regeneration only for learned and native conditions, and for FlowEdit it is the start of its image-conditioned ODE. Isolated start 0 and start 6 give identical images in all 48 cases because their B0 prefix is identical.
\FloatBarrier
\paragraph{Matched background-change comparison.}
Subject-category retention is measured separately from the requested background change, the CLIP scene margin is an uncalibrated directional proxy, and region LPIPS is computed inside the source subject boxes, which also contain background pixels.
\begin{table}[!htbp]\centering\small
\setlength{\tabcolsep}{4pt}
\caption{B0-isolated source cohort: background changes. Retained counts require the original subject category in both source and output, using all 48 attempts. CLIP denotes scene-margin change; FG LPIPS uses 5\% padded source boxes, with defined-region counts in parentheses.}\label{tab:b0-isolated-background}
\begin{tabular}{rlrrrr}\toprule
Start & Method & DINO & OWLv2 & CLIP & FG LPIPS ($n$)\\\midrule
0 & B0-isolated learned & 48 & 48 & 0.103 & 0.657 (48)\\
0 & Native shared state & 48 & 48 & 0.101 & 0.664 (48)\\
0 & FlowEdit PNG, CFG4 & 48 & 48 & 0.089 & 0.572 (48)\\
0 & FlowEdit PNG, CFG6 & 48 & 48 & 0.092 & 0.605 (48)\\
0 & FlowEdit PNG, CFG8 & 48 & 48 & 0.094 & 0.642 (48)\\
0 & FlowEdit latent, CFG4 & 48 & 48 & 0.088 & 0.577 (48)\\
\midrule
6 & B0-isolated learned & 48 & 48 & 0.101 & 0.476 (48)\\
6 & Native shared state & 48 & 48 & 0.100 & 0.460 (48)\\
6 & FlowEdit PNG, CFG4 & 48 & 47 & 0.086 & 0.353 (48)\\
6 & FlowEdit PNG, CFG6 & 48 & 48 & 0.091 & 0.439 (48)\\
6 & FlowEdit PNG, CFG8 & 48 & 48 & 0.094 & 0.517 (48)\\
6 & FlowEdit latent, CFG4 & 48 & 48 & 0.086 & 0.373 (48)\\
\midrule
9 & B0-isolated learned & 48 & 48 & 0.077 & 0.325 (48)\\
9 & Native shared state & 48 & 48 & 0.076 & 0.336 (48)\\
9 & FlowEdit PNG, CFG4 & 48 & 48 & 0.052 & 0.241 (48)\\
9 & FlowEdit PNG, CFG6 & 48 & 48 & 0.076 & 0.319 (48)\\
9 & FlowEdit PNG, CFG8 & 48 & 48 & 0.086 & 0.381 (48)\\
9 & FlowEdit latent, CFG4 & 48 & 48 & 0.054 & 0.259 (48)\\
\midrule
15 & B0-isolated learned & 48 & 48 & 0.039 & 0.119 (48)\\
15 & Native shared state & 48 & 48 & 0.039 & 0.120 (48)\\
15 & FlowEdit PNG, CFG4 & 48 & 48 & 0.028 & 0.077 (48)\\
15 & FlowEdit PNG, CFG6 & 48 & 48 & 0.035 & 0.126 (48)\\
15 & FlowEdit PNG, CFG8 & 48 & 48 & 0.041 & 0.182 (48)\\
15 & FlowEdit latent, CFG4 & 48 & 48 & 0.030 & 0.065 (48)\\
\midrule
21 & B0-isolated learned & 48 & 48 & 0.015 & 0.032 (48)\\
21 & Native shared state & 48 & 48 & 0.016 & 0.035 (48)\\
21 & FlowEdit PNG, CFG4 & 48 & 48 & 0.011 & 0.041 (48)\\
21 & FlowEdit PNG, CFG6 & 48 & 48 & 0.016 & 0.061 (48)\\
21 & FlowEdit PNG, CFG8 & 48 & 48 & 0.020 & 0.090 (48)\\
21 & FlowEdit latent, CFG4 & 48 & 48 & 0.012 & 0.021 (48)\\
\bottomrule\end{tabular}\end{table}
\FloatBarrier
\paragraph{What exact invariance does and does not buy.}
Equation~\ref{eq:b0-input-invariance} holds bit-exactly, so the B0 boundary state is free of subject input. The rendered background can nevertheless change substantially. With the same seed, background sentence and B0 state, and only the subject text changed, the mean background-region LPIPS against the source is 0.1735 at 5\% padding (48/48 defined). This residual comes from B1/B2 acting on a shared spatial state. Isolating the input makes the early program subject-independent but does not by itself preserve the background, which is what resuming a saved prefix does.
\FloatBarrier
\paragraph{Recomputed source-size strata.}
Source size is recomputed on the new sources, since the intervention can change subject scale. At zero padding the frozen Grounding DINO boxes place 40/48 sources in the small stratum, 8/48 in the medium stratum and none in the large stratum. No large-source contrast is therefore available, and earlier large-source findings are neither reproduced nor contradicted.
\paragraph{Paired differences against the native shared-state control.}
Each row pairs the same case, direction and start; intervals are descriptive family bootstrap percentiles over the eight families, with no multiplicity or prior-search correction.
\begin{table}[!htbp]\centering\small
\setlength{\tabcolsep}{4pt}
\caption{B0-isolated learned minus native shared state, all 48 paired cases. Detector entries are percentage-point differences in strict replacement (subject) or source-subject retention (background); LPIPS entries are differences in the 5\% padded region metric, where a negative value is closer to the source. Intervals are unadjusted descriptive family bootstrap percentiles.}\label{tab:b0-isolated-paired}
\begin{tabular}{llrrr}\toprule
Direction & Start & Both/retained (pp) & 95\% interval & Region LPIPS\\\midrule
subject & 0 & +20.8 & [+4.2, +45.8] & -0.277\\
subject & 6 & +6.2 & [+0.0, +18.8] & -0.007\\
subject & 9 & +8.3 & [+2.1, +14.6] & -0.001\\
subject & 15 & +8.3 & [+2.1, +16.7] & -0.001\\
subject & 21 & +4.2 & [+0.0, +10.4] & -0.001\\
\midrule
background & 0 & +0.0 & [+0.0, +0.0] & -0.007\\
background & 6 & +0.0 & [+0.0, +0.0] & +0.016\\
background & 9 & +0.0 & [+0.0, +0.0] & -0.010\\
background & 15 & +0.0 & [+0.0, +0.0] & -0.001\\
background & 21 & +0.0 & [+0.0, +0.0] & -0.003\\
\bottomrule\end{tabular}\end{table}
At start 0 the native control regenerates rather than resuming a suffix, so its large LPIPS gap is not an editing result. From start 6 onward the two share the saved state, and most remaining differences are small relative to their intervals. The background rows are zero because both methods retain the source subject in all 48 cases, a ceiling of the retention proxy.
\FloatBarrier
\paragraph{Distribution-quality scope.}
MJHQ-30K supplies full captions but no background descriptions, so the FID table does not cover this variant and no earlier value is reused for it. A preflight of the frozen automatic scene compiler found one empty B0 text among the first 32 sorted MJHQ captions, and only 3/8 subject-replacement pairs yielded identical scene text. The automatic route therefore fails the input-invariance contract, and no FID is reported for the variant.

\FloatBarrier

\clearpage
\part*{Part V\quad Stress tests and example galleries}
\section{Bidirectional editing stress test}\label{app:bidirectional-stress}
\paragraph{Frozen design.}
We also build 48 sources from four subject pairs
(dog--cat, cow--horse, elephant--giraffe, horse--zebra), two scene pairs
(meadow--snow field, beach--rocky desert), two framings and three seeds. Each
source receives a subject-only and a background-only edit under six methods
at branches 9, 15 and 21, for 1728 outputs. The learned route uses the final
6/12/12 checkpoint, sources come from the frozen reference compiler, and
native continuation shares the saved state. FlowEdit--SANA uses source CFG 4
and target CFG 4, 6 or 8, and a latent-input CFG 4 control removes the
PNG/VAE round trip.

\paragraph{Measurements.}
Strict replacement uses the frozen DINO (0.40) and OWLv2 (0.31) thresholds
separately and jointly. Source-box regions with 5\% padding give background
LPIPS for subject edits and subject-region LPIPS for background edits, and
all 48 sources have defined regions. The scene-direction CLIP margin is a
continuous proxy, not a calibrated success rate. Subject retention under
background edits is 48/48 for every method and branch, so it cannot
distinguish methods.

\begin{table}[!htbp]
\centering
\caption{Complete bidirectional grid, 48 attempts per direction and row.
D/O/B are strict replacement counts for DINO, OWLv2 and both. BG and FG are
non-target LPIPS for subject and background edits (lower is better), and
Scene $\Delta$ is the CLIP margin change.}
\label{tab:bidirectional-stress}
\small\setlength{\tabcolsep}{3.5pt}
\begin{tabular}{rlrrrr}
\toprule
Branch & Method & D/O/B & BG $\downarrow$ & FG $\downarrow$ & Scene $\Delta$ \\
\midrule
9 & Learned suffix & 13/48/13 & 0.1195 & 0.2953 & 0.0844 \\
9 & Native shared-state & 8/48/8 & 0.1210 & 0.3095 & 0.0826 \\
9 & FlowEdit PNG, CFG 4 & 10/48/10 & 0.0762 & 0.1890 & 0.0597 \\
9 & FlowEdit PNG, CFG 6 & 14/48/14 & 0.1059 & 0.2424 & 0.0816 \\
9 & FlowEdit PNG, CFG 8 & 12/48/12 & 0.1407 & 0.3065 & 0.0869 \\
9 & FlowEdit latent, CFG 4 & 10/48/10 & 0.0697 & 0.2002 & 0.0637 \\
\addlinespace
15 & Learned suffix & 9/45/9 & 0.0548 & 0.1030 & 0.0394 \\
15 & Native shared-state & 6/46/6 & 0.0550 & 0.0995 & 0.0386 \\
15 & FlowEdit PNG, CFG 4 & 6/43/6 & 0.0398 & 0.0658 & 0.0252 \\
15 & FlowEdit PNG, CFG 6 & 10/48/10 & 0.0534 & 0.1057 & 0.0322 \\
15 & FlowEdit PNG, CFG 8 & 12/47/12 & 0.0707 & 0.1570 & 0.0389 \\
15 & FlowEdit latent, CFG 4 & 5/44/5 & 0.0295 & 0.0572 & 0.0254 \\
\addlinespace
21 & Learned suffix & 2/25/2 & 0.0177 & 0.0331 & 0.0149 \\
21 & Native shared-state & 3/26/3 & 0.0172 & 0.0309 & 0.0157 \\
21 & FlowEdit PNG, CFG 4 & 3/22/3 & 0.0249 & 0.0366 & 0.0104 \\
21 & FlowEdit PNG, CFG 6 & 4/31/4 & 0.0299 & 0.0542 & 0.0161 \\
21 & FlowEdit PNG, CFG 8 & 4/35/4 & 0.0376 & 0.0801 & 0.0210 \\
21 & FlowEdit latent, CFG 4 & 2/23/2 & 0.0113 & 0.0176 & 0.0106 \\
\bottomrule
\end{tabular}
\end{table}

\paragraph{Interpretation.}
At branch 9, learned versus native continuation yields 13 versus 8 strict
joint replacements, with similar background LPIPS (0.1195 versus 0.1210).
The paired replacement difference is 10.4 percentage points, with a
95\% family-bootstrap interval of [2.1,22.9]; the background-LPIPS difference
is $-0.0015$ with interval $[-0.0044,0.0014]$. For background edits at the
same branch, subject-region LPIPS is 0.2953 versus 0.3095 (difference
$-0.0142$, interval $[-0.0289,-0.0034]$), while the scene-direction proxy
is slightly higher. Intervals use 10,000 resamples of the eight families.
These are local improvements over native continuation. FlowEdit CFG 6
obtains 14 joint replacements with lower background LPIPS (0.1059) at
branch 9, so they do not establish superiority over the full FlowEdit
grid.

\paragraph{Source size and visual checks.}
Source-box areas define small ($<10\%$, $n=11$), medium ($10$--$25\%$,
$n=14$) and large ($\ge25\%$, $n=23$) strata. At branch 9 the large-group
joint counts are 2, 2, 1, 3, 3 and 2 in table order, so a larger-subject
advantage is not supported. OWLv2 replacement saturates at branch 9, and
DINO has a calibration shortfall for the target class in 12/48 cases (8/23
large). Visual inspection finds stronger late subject changes, but also
hybrid anatomy, background edits that alter pose and late scene edits with
little visible effect.

\begin{figure}[!htbp]
\centering
\includegraphics[width=\linewidth]{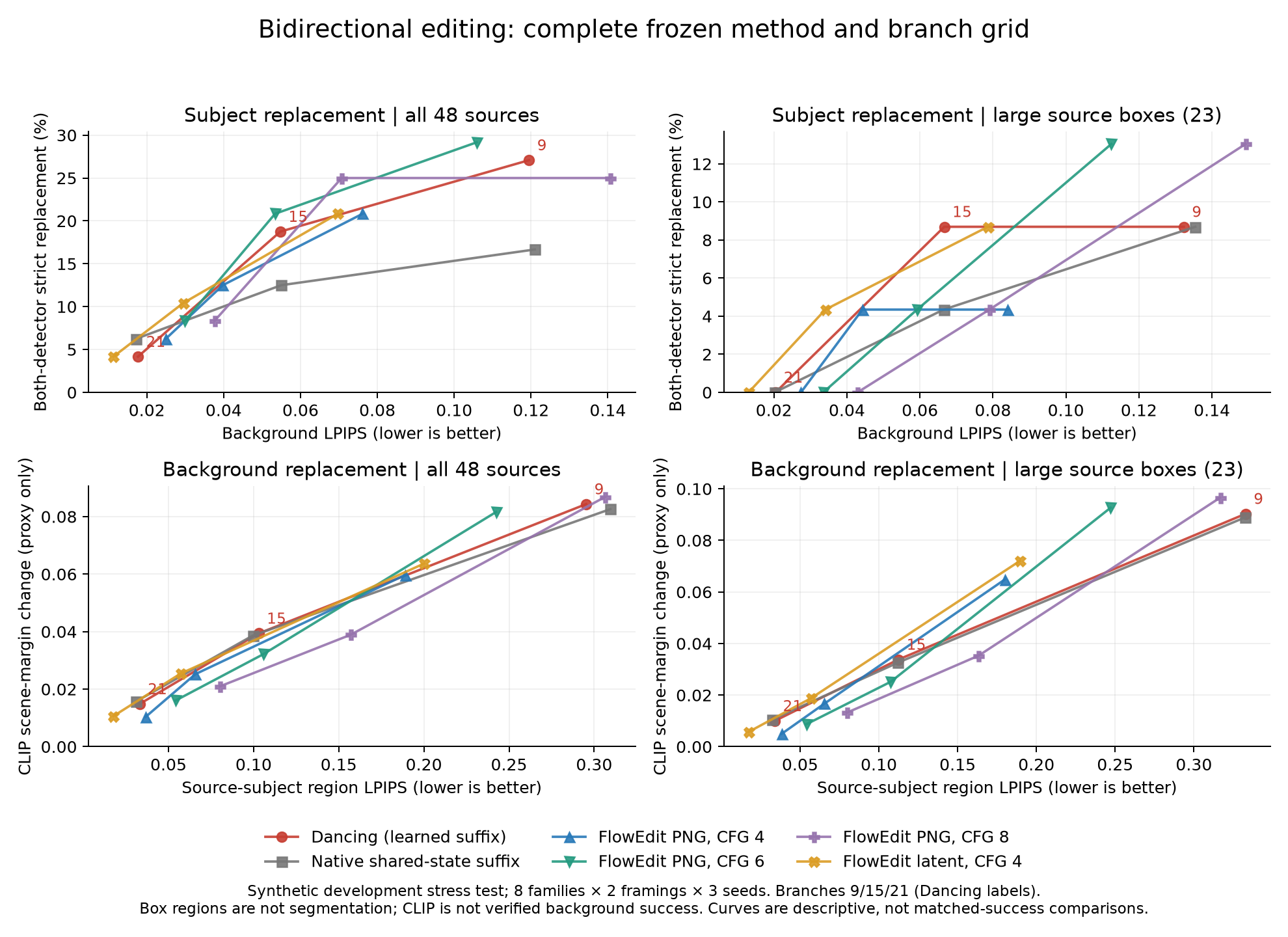}
\caption{Bidirectional stress-test tradeoffs for all sources and for large
source boxes. Lines connect the three measured branch points. Upper panels
use strict replacement, lower panels the scene-direction CLIP change.}
\label{fig:bidirectional-stress}
\end{figure}

\FloatBarrier
\section{Additional final-checkpoint examples}
\label{app:final-representative}
The examples below use the final compiler with schedule 6/12/12
at branch 9. In each row both target outputs share the compiler and initial
noise, and prefix reuse resumes the saved source state. They were selected
after reviewing 120 source/regeneration/edit triplets (all 72 family--seed
pairs for this schedule and seed 101 for the other two) to show where prefix
reuse visibly retains context that full regeneration loses. Images are
uncropped and unretouched, and the selection does not estimate success
frequency. Failures also occur. In a surfboard-to-kayak edit the person
stands on the kayak instead of holding it, as do the image-input controls,
and a racket-to-bat edit alters the court and duplicates the implement.

\begin{figure}[!htbp]
\centering\setlength{\tabcolsep}{2pt}
\begin{tabular}{ccc}
\scriptsize Source & \scriptsize Full regeneration & \scriptsize Ours: prefix reuse\\
\multicolumn{3}{l}{\scriptsize Horse to donkey; seed 303}\\
\includegraphics[width=.31\linewidth]{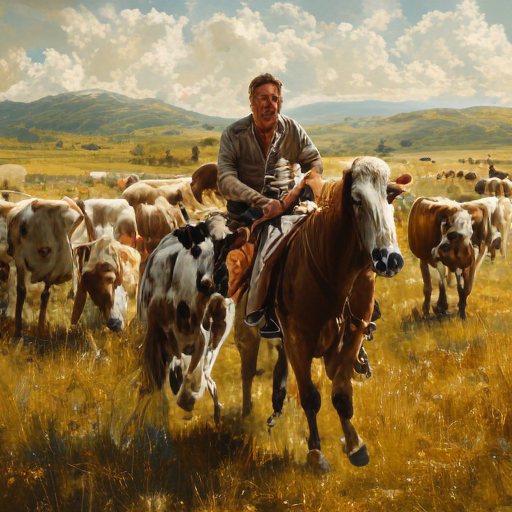} & \includegraphics[width=.31\linewidth]{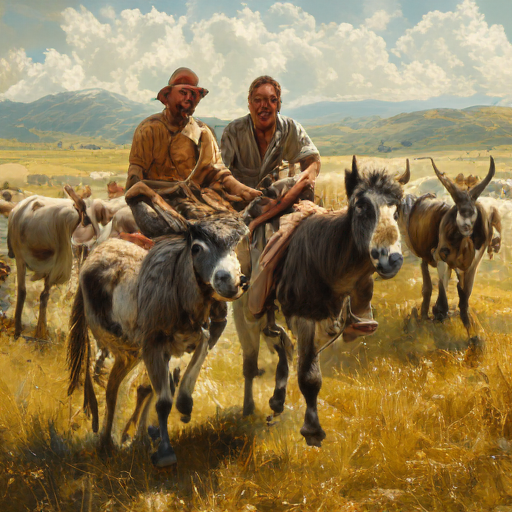} & \includegraphics[width=.31\linewidth]{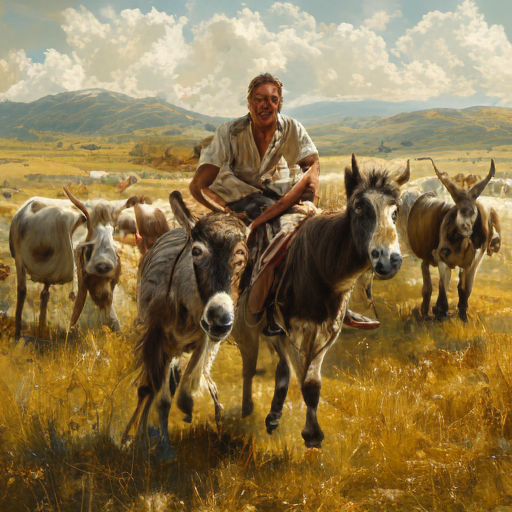}\\[3pt]
\multicolumn{3}{l}{\scriptsize Dog to cat; seed 303}\\
\includegraphics[width=.31\linewidth]{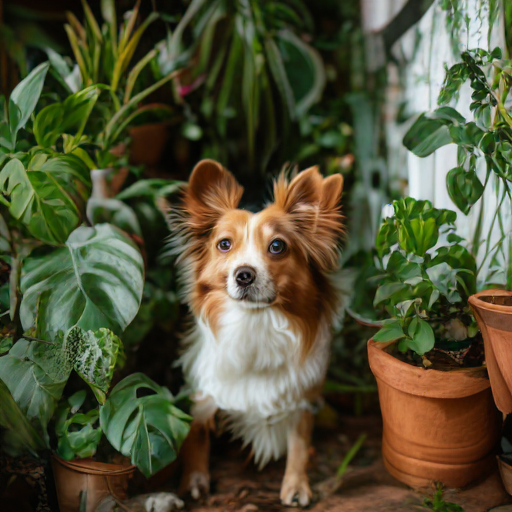} & \includegraphics[width=.31\linewidth]{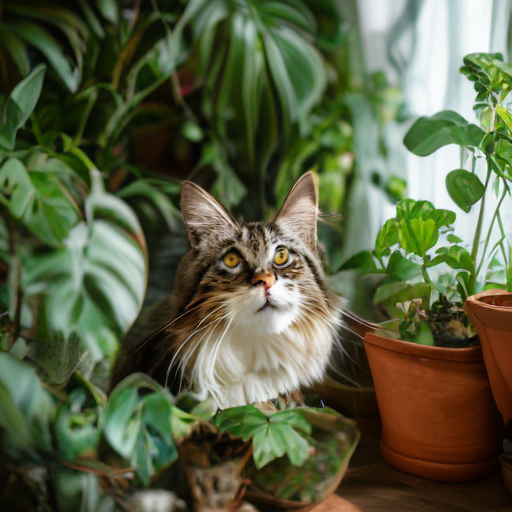} & \includegraphics[width=.31\linewidth]{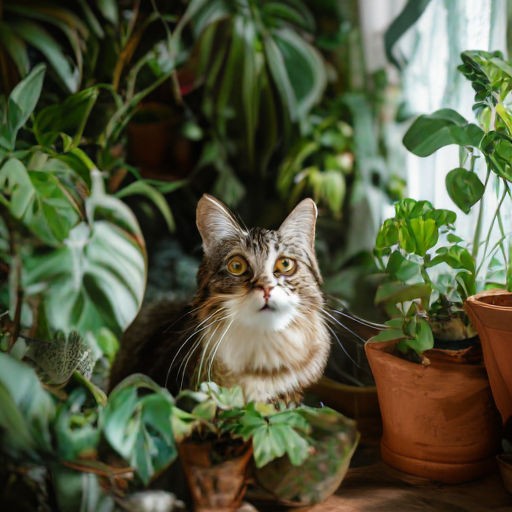}\\[3pt]
\multicolumn{3}{l}{\scriptsize Teddy bear to plush rabbit; seed 202}\\
\includegraphics[width=.31\linewidth]{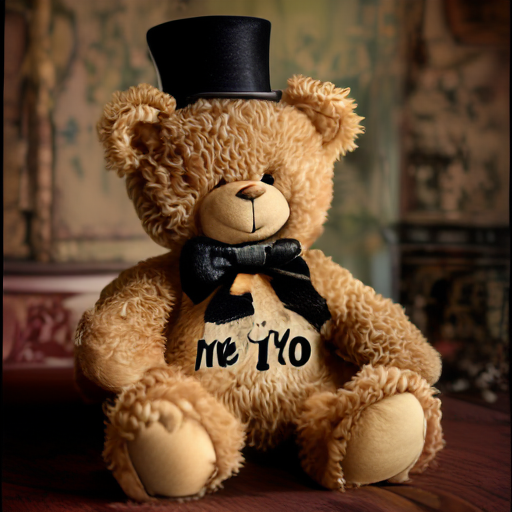} & \includegraphics[width=.31\linewidth]{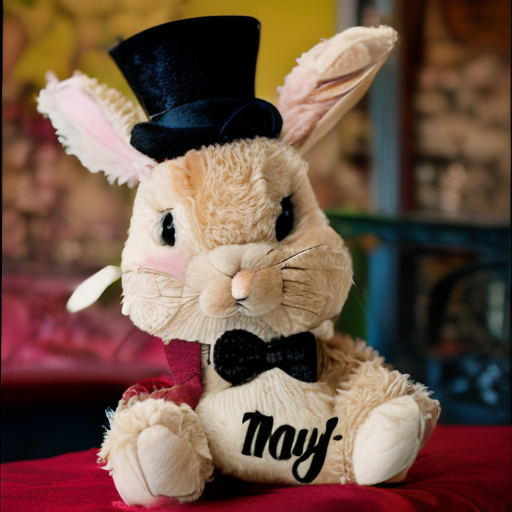} & \includegraphics[width=.31\linewidth]{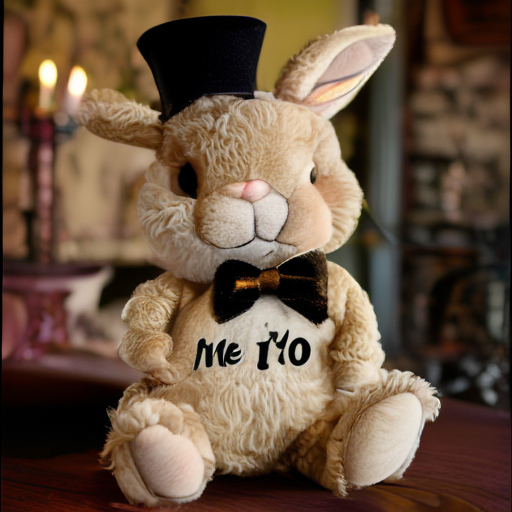}\\
\end{tabular}
\caption{\textbf{Prefix reuse retains what regeneration discards.}
Regeneration adds a second rider, reframes the plants and replaces the
room and the lettering. Prefix reuse keeps the single rider and herd, the
potted plants and their layout, and the candlelit room, the bow tie and the
``me to you'' lettering, while changing the requested subject.}
\label{fig:final-representative-1}
\end{figure}

\end{document}